\setlist{nosep} 
\definecolor{MyGray}{rgb}{0.7,0.7,0.7}
\titleformat*{\section}{\color{MSBlue}\bfseries\large}
\titleformat*{\subsection}{\color{MSBlue}\bfseries\large}
\titleformat*{\subsection}{\color{MSBlue}\bfseries\large}
\titleformat*{\paragraph}{\color{MSBlue}\bfseries\large}
\definecolor{MyPlum}{rgb}{0.3,0,0.3}
\definecolor{MyOrange}{rgb}{1,0.5,0}
\definecolor{deep_blue}{rgb}{0,.2,.5}
\definecolor{darkblue}{rgb}{0,.15,.5}
\providecommand{\ve}[1]{\boldsymbol{#1}}
\providecommand{\norm}[1]{\ensuremath{\left \lVert#1 \right  \rVert}}
\newcommand{\argmax}{\operatornamewithlimits{argmax}}
\newcommand{\argmin}{\operatornamewithlimits{argmin}}
\providecommand{\mc}[1]{\mathcal{#1}}
\providecommand{\mt}[1]{\widetilde{#1}}
\providecommand{\mb}[1]{\boldsymbol{#1}}
\providecommand{\mh}[1]{\hat{#1}}
\providecommand{\mtb}[1]{\widetilde{\boldsymbol{#1}}}
\newtheorem{thm}{Theorem}
\newtheorem{lem}{Lemma}
\newtheorem{defi}{Definition}
\newtheorem{proposition}{Proposition}
\newtheorem{coro}[thm]{Corollary}
\newtheorem{remark}{Remark}
\newcommand{\bla}{\begin{block}}
\newcommand{\blb}{\end{block}}
\newcommand{\defa}{\begin{defi}}
\newcommand{\defb}{\end{defi}}
\newcommand{\thma}{\begin{thm}}
\newcommand{\thmb}{\end{thm}}
\newcommand{\mata}{\begin{bmatrix}}
\newcommand{\matb}{\end{bmatrix}}
\newcommand{\Real}{\mathbb{R}}
\newcommand{\bA}{\mb{A}}
\newcommand{\bB}{\mb{B}}
\newcommand{\bI}{\mb{I}}
\newcommand{\bS}{\mb{S}}
\newcommand{\bU}{\mb{U}}
\newcommand{\bV}{\mb{V}}
\newcommand{\bW}{\mb{W}}
\newcommand{\bX}{\mb{X}}
\newcommand{\bY}{\mb{Y}}
\newcommand{\bZ}{\mb{Z}}
\newcommand{\bd}{\mb{d}}
\newcommand{\bu}{\mb{u}}
\newcommand{\bx}{\mb{x}}
\newcommand{\EE}{\mathbb{E}}           
\newcommand{\II}{\mathbb{I}}           
\newcommand{\PP}{\mathbb{P}}
\newcommand{\bth}{\ve{\theta}}
\newcommand{\bTh}{\ve{\Theta}}
\newcommand{\bSig}{\ve{\Sigma}}
\newcommand{\bmu}{\ve{\mu}}
\newcommand{\bpi}{\ve{\pi}}
\newcommand{\bdel}{\ve{\delta}}
\providecommand{\sct}[1]{{\texttt{#1}}}
\newcommand{\Svd}{\sct{SVD}}
\newcommand{\Pca}{\sct{PCA}}
\newcommand{\Lda}{\sct{LDA}}
\newcommand{\Qda}{\sct{QDA}}
\newcommand{\Cca}{\sct{CCA}}
\newcommand{\Pls}{\sct{PLS}}
\newcommand{\Lol}{\sct{LOL}}
\newcommand{\Rlol}{\sct{RLOL}}
\newcommand{\Rp}{\sct{RP}}
\newcommand{\Qoq}{\sct{QOQ}}
\newcommand{\Lfl}{\sct{LFL}}
\newcommand{\Road}{\sct{ROAD}}
\newcommand{\Lasso}{\sct{Lasso}}
\newcommand{\Rrlda}{\sct{rrLDA}}
\newcommand{\TT}{^{\ensuremath{\mathsf{T}}}}
\tikzstyle{node} = [rectangle, rounded corners, minimum width=1cm, minimum height=1cm]
\tikzstyle{node2}=[rectangle split,rectangle split parts=2,rounded corners]
\tikzstyle{arrow} = [thick,->,>=stealth]
\renewcommand\@maketitle{%
	\noindent\parbox{1\textwidth}{
    \color{MSBlue} {\bfseries\LARGE{{\@title \par \hfill}}}
    \par
    \color{lgray}{\@author}
    }
}
\newcommand{\Xox}{\texttt{XOX}}
\begin{document}

\def\spacingset#1{\renewcommand{\baselinestretch}%
{#1}\small\normalsize} \spacingset{1}
\title{ 
Supervised Dimensionality Reduction for Big Data
}

\author{
Joshua T.~Vogelstein$^{1 *\dag}$,  
Eric W.~Bridgeford$^{1*}$,  
Minh Tang$^1$, 
Da Zheng$^1$, 
Christopher Douville$^1$,
Randal Burns$^1$, 
Mauro Maggioni$^1$
\\ $^1$ Johns Hopkins University, 
$^*$Co-First, $^\dag$Corresponding Author}
\date{}

\maketitle
\thispagestyle{empty}

\textbf{%
To solve key biomedical problems, experimentalists now routinely measure millions or billions of features (dimensions) per sample, with the hope that data science techniques will be able to build  accurate data-driven inferences.  Because  sample sizes are typically  orders of magnitude smaller than the dimensionality of these data, valid inferences require finding a low-dimensional representation that preserves the discriminating information (e.g., whether the individual suffers from a particular disease). There is a lack of interpretable supervised dimensionality reduction methods that scale to millions of dimensions with strong statistical theoretical guarantees. 
{\color{black}We introduce an approach, {XOX},} to extending principal components analysis by incorporating class-conditional moment estimates into the low-dimensional projection.  The simplest version, ``Linear Optimal Low-rank'' projection (LOL), incorporates the class-conditional means.
We prove, and substantiate with both synthetic and real data benchmarks, that LOL and its generalizations {\color{black}in the XOX framework} lead to improved data representations  for subsequent classification, while maintaining computational efficiency and scalability. 
Using multiple brain imaging datasets consisting of $>$150 million features, and several genomics datasets with  $>$500,000 features, {\color{black}LOL outperforms other scalable linear dimensionality reduction techniques in terms of accuracy, while only requiring a few minutes on a standard desktop computer.}
}

Supervised learning---the art and science of estimating statistical relationships using labeled training data---has  enabled a wide variety of basic and applied findings, ranging from discovering biomarkers in omics data \cite{Vogelstein2014a} to  recognizing objects from images \cite{Krizhevsky2012}.
A special case of supervised learning is classification, where a classifier predicts the ``class'' of a novel observation (for example, by predicting sex from an MRI scan). One of the most foundational and important approaches to classification is Fisher's Linear Discriminant Analysis (\Lda) \cite{Fisher1925a}.
\Lda~has a number of highly desirable properties for a  classifier.
First, it is based on  simple geometric reasoning: when the data are Gaussian, all the information is in the means and variances, so the optimal classifier uses both the means and the variances.
Second,  \Lda~can be applied to multiclass problems.
Third, theorems guarantee that when the sample size $n$ is large and the dimensionality $p$ is relatively small, \Lda~converges to the optimal classifier under the Gaussian assumption.
Finally,  algorithms for implementing it are highly efficient.

Modern scientific datasets, however, present challenges for classification that were not addressed in Fisher's era.
Specifically, the dimensionality of datasets is quickly ballooning.
Current raw data can consist of hundreds of millions  of features or dimensions; for example, an entire genome or connectome.  Yet, the sample sizes have not experienced a concomitant increase.
This ``large $p$, small $n$'' problem is a non-starter
for many classical statistical approaches because they were designed with a ``small $p$, large $n$'' situation in mind.
Running \Lda~when $p \ge n$  is like trying to fit a line to a point: there are infinitely many equally good fits (all lines that pass through the point), and no way to know which of them is ``best''.
Therefore, without further constraints these algorithms will overfit, meaning they will choose a classifier based on noise in the data, rather than discarding the noise in favor of the desired signal.
We also desire methods that  can adapt to the complexity of the data,  are robust to outliers, and are computationally efficient.
Several complementary strategies have been pursued to address these $p \ge n$ problems.

First, and perhaps the most widely used method, is Principal Components Analysis (\Pca) \cite{Jolliffe1986}. According to PubMed, \Pca~has been referenced over 40,000 times, and nearly 4,000 times in 2018 alone.  This is in contrast to other methods that receive much more attention in the media, such as deep learning, random forests, and sparse learning, which received $\sim$ 2,000, $\sim$ 1,200  and $\sim$ 500 hits, respectively. This suggests that \Pca~remains the most popular workhorse  for high-dimensional problems.  \Pca~ ``pre-processes'' the data by reducing its dimensionality  to those dimensions whose variance is largest in the dataset.  While highly successful, \Pca~is a wholly \emph{unsupervised} dimensionality reduction technique, meaning that \Pca~does not use the class labels while learning the low-dimensional representation, resulting in sub-optimal performance for subsequent classification.  Nonlinear manifold learning techniques generalize \Pca~\cite{Lee2007-bw}, but also typically do not incorporate class label information; moreover, they scale poorly. Deep learning provides the most recent version of nonlinear manifold learning, for example, using (supervised) autoencoders, but these methods remain poorly understood, have many parameters to tune, and typically do not provide interpretable results~\cite{Goodfellow2016-ac}. Further, deep learning tends to suffer in the wide data problem, where the number of samples is far less than the dimensionality.

The second set of strategies  regularize or penalize a supervised method, such as regularized~\Lda~\cite{Witten2009a} or canonical correlation analysis (\Cca)~\cite{Shin11}.  Such approaches can drastically overfit in the $p>n$ setting, tend to lack theoretical support in these contexts,  and have multiple ``knobs'' to tune that are computationally taxing.
Partial least squares (\Pls) is another popular method in this set that often achieves impressive empirical performance, though it lacks strong theoretical guarantees and a scalable implementation~\cite{Ter_Braak1998-cc, Brereton2014-wr}.
Sparse methods are the third common strategy to mitigate this ``curse of dimensionality''~ \cite{Tibshirani1996,Fan2012a,Hastie2015}. Unfortunately, exact solutions are computationally intractable, and approximate solutions have theoretical guarantees only under very restrictive assumptions, and are quite fragile to those assumptions~\cite{Su2015}.
Thus, there is a gap: no existing  approach  can classify multi-class {\color{black}wide} data with millions of features while  obtaining strong theoretical guarantees,  favorable and interpretable empirical performance, and a flexible, robust, and scalable implementation.

To address these issues,  {\color{black}we developed a technique for incorporating class-conditional moment estimates, \Xox, the simplest example of which is \Lol}.
\emph{The key intuition behind \Lol~is that we can jointly use the means and variances from each class  (like \Lda~and \Cca), but without requiring more dimensions than samples   (like \Pca), or restrictive sparsity assumptions.}
Using random matrix theory, we are able to prove that when the data are sampled from a  Gaussian, \Lol~finds a better low-dimensional representation than \Pca, \Lda, \Cca, and  other  linear methods. Under  relatively relaxed assumptions, this is true regardless of the dimensionality of the features, the number of samples, or the number of dimensions in which we project.
We then demonstrate the superiority of techniques derived using the \Xox~ approach {\color{black}---including (i) \Lol, (ii) a variant of {\color{black}\Xox}~which allows greater flexibility of the class-conditional covariances called \Qoq, and (iii) a robust variant of \Lol~called \Rlol---}over other methods numerically on a variety of simulated settings including several not following the theoretical assumptions. Finally, we show that on several 500 gigabyte neuroimaging datasets, 
and several multi-gigabyte genomics datasets, 
\Lol~achieves superior accuracy at lower dimensions while requiring only a few minutes of time on a single workstation.

\subsection*{Supervised Manifold Learning}

\begin{figure}
\centering 
\includegraphics[height=2in]{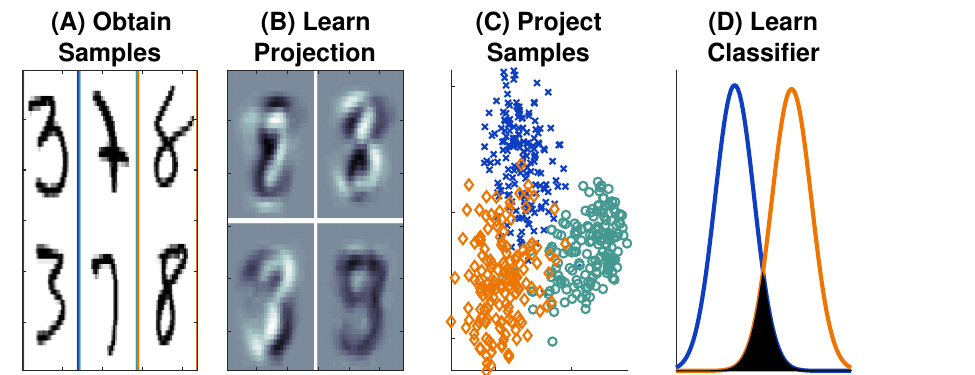} 
\caption{Schematic illustrating Linear Optimal Low-rank (\Lol)  as a supervised manifold learning technique.
\textbf{(A)} 300 training samples of the numbers 3, 7, and 8 from the MNIST dataset (100 samples per digit);  each sample is a 28 $\times$ 28 = 784 dimensional image (boundary colors are  for visualization purposes).
\textbf{(B)} The first four projection matrices learned by \Lol.  Each is a linear combination of the sample images.
\textbf{(C)} Projecting 500 new (test) samples into the top two learned dimensions;
 digits color coded as in (A). \Lol-projected data from three distinct clusters.
\textbf{(D)}  Using the low-dimensional data to learn a classifier.  The estimated  distributions for 3 and 8 of the test samples (after
projecting data into two dimensions and using \Lda~to classify) demonstrate that 3 and 8 are easily separable by linear methods after \Lol~projections (the color of the line indicates the digit).
The filled area is the estimated error rate; the goal of any classification algorithm is to minimize that area. \Lol~is performing well on this high-dimensional real data example.
}
\label{f:mnist}
\end{figure}

A general strategy for supervised manifold learning is schematized in Figure~\ref{f:mnist}, and outlined here.
Step \textbf{(A)}: Obtain or select $n$ training samples of high-dimensional data.  For concreteness, we use one of the most popular benchmark datasets, the MNIST dataset \cite{mnist}.  This dataset consists of  images of hand-written digits $0$ through $9$.  Each image is represented by a $28\times28$ matrix, which means that the observed  dimensionality of the data is $p=28^2=784$.  Because we are motivated by the $n \ll p$ scenario, we subsample the data to select $n=300$ examples of the numbers $3$, $7$, and $8$ ($100$ of each).
Step \textbf{(B)}: Learn a ``projection'' that maps the high-dimensional data to a low-dimension representation.  One can do so in a way that ignores which images correspond to which digit (the ``class labels''), as \Pca~and most manifold learning techniques do, or try to use the labels, as \Lda~and sparse methods do.
 \Lol~is a supervised linear manifold learning technique that  uses the class labels to learn projections that are linear combinations of the original data samples. 
Step \textbf{(C)}: Use the learned projections to map  high-dimensional data into the learned lower-dimensional space. This step requires having learned a projection that can be applied to new (test) data samples for which we do not know the true class labels.  Nonlinear manifold learning methods typically cannot be applied in this way (though see \cite{Bengio2004}).  \Lol, however, can project new samples in such a way as to separate the data into classes.
Step \textbf{(D)}: Using the low-dimensional representation of the data, learn a classifier.  A good classifier correctly identifies as many points as possible with the correct label.  
For these data, when  \Lda~is used on the low-dimensional data learned by \Lol, the data points are mostly linearly separable, yielding a highly accurate classifier.

\subsection*{The Geometric Intuition of \Lol}

To build intuition for situations when \Lol~performs well, and when it does not,
we consider the simplest high-dimensional classification setting.
We observe $n$ samples $(\bx_i,y_i)$, where $\bx_i$ are $p$ dimensional feature vectors, and $y_i$ is the binary class label, that is, $y_i$ is either $0$ or $1$.
We assume that both classes are distributed according to a multivariate Gaussian distribution,  the two classes have the same identity covariance matrix  (all features are uncorrelated with unity variance), and data from either class is equally likely, so that the only difference between the classes is their means.
In this scenario, the optimal low-dimensional projection is analytically available: it is the dot product of the difference of means and the inverse covariance matrix, commonly referred to as Fisher's Linear Discriminant Analysis (\Lda) \cite{Bickel2004a} (see Appendix~\ref{sec:background} for derivation).
When the distribution of the data is unavailable, as in all real data problems, machine learning methods can be used to estimate the parameters.
Unfortunately, when $n<p$, the estimated covariance matrix will not be invertible (because the solution to the underlying mathematical problem is under specified), so some other approach is required.
As mentioned above, \Pca~is commonly used to learn a low-dimensional representation.
\Pca~uses the pooled sample mean
and the pooled sample covariance matrix.
The \Pca~projection is composed of the top $d$ eigenvectors of the pooled sample covariance matrix, after subtracting the pooled mean (thereby completely ignoring the class labels).

In contrast, \Lol~uses the class-conditional means and class-centered covariance.
This approach is motivated  by Fisher's \Lda, which uses the same two terms, and should therefore improve performance over \Pca.
More specifically, for a two-class problem, \Lol~is constructed as follows:
\begin{enumerate}
    \item Compute the {sample mean of each class}.
    \item Estimate the difference between  means.
    \item Compute the class-centered covariance matrix, that is, compute the covariance matrix after subtracting the class mean from each point.
    \item Compute the eigenvectors of this class-conditionally centered covariance.
    \item Concatenate the difference of the means with the top $d-1$ eigenvectors of class-centered covariance.
\end{enumerate}
Note that the sample class-centered covariance matrix estimates the population covariance,
{color{red}whereas the sample pooled covariance matrix is distorted by the difference of the class means. Further, as discussed in Appendix \ref{sec:main}, the class-centered covariance matrix is equivalent to ``Reduced Rank \Lda'' \cite{Hastie1996}  ({\color{black}\Rrlda}~hereafter, which is simply \Lda~but truncating the covariance matrix)}.
For the theoretical background on \Lda~ and {\color{black}\Rrlda},  a formal definition of \Lol, and detailed description of the simulation settings that follow, see Appendices \ref{sec:background}, \ref{sec:LOL}, and \ref{sec:simulations}, respectively.
Figure \ref{f:cigars} shows three different examples of 100 data points sampled from a 1,000 dimensional Gaussian  to geometrically illustrate the intuition that motivated \Lol.  In each case, all dimensions are uncorrelated with one another,  and all classes are equally likely with the same covariance; the only difference between the classes are their means.

\begin{figure}[h!]
\centering
\includegraphics[width=0.7\linewidth]{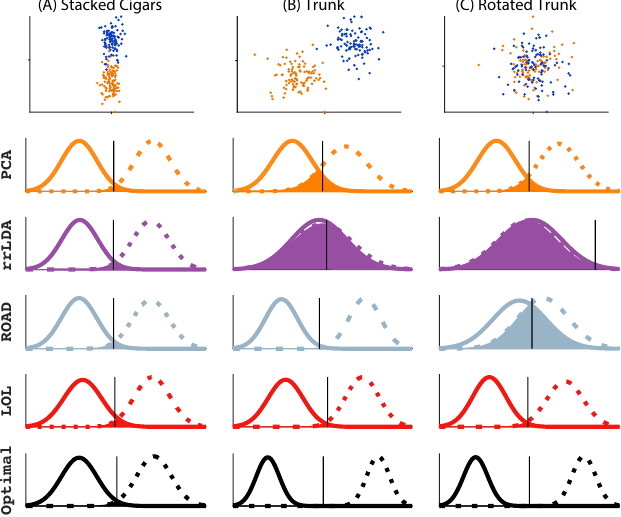}
\caption{
\Lol~achieves near-optimal performance for three different multivariate Gaussian distributions,
each with $100$ samples in $1000$ dimensions.
For each approach, we project into the top 3 dimensions, and then use \Lda~to classify 10,000 new samples.
The six rows show (from top to bottom):
\emph{Row 1}: A scatter plot of the first two dimensions of the sampled points, with class 0 and 1 as orange and blue dots, respectively.
The next rows each show the estimated posterior for class 0 and class 1, in solid and dashed lines, respectively. The overlap of the distributions---which quantifies the magnitude of the error---is filled.  The black vertical line shows the estimated threshold for each method.
The techniques include:
\Pca;
reduced rank \Lda ({\color{black}\Rrlda}), a method that projects onto the top $d$ eigenvectors of  sample class-conditional covariance;
\Road, a sparse method designed specifically for this model;
\Lol, our  proposed method; and
the Bayes optimal classifier.
\textbf{(A) Stacked Cigars} The mean difference vector is aligned with the direction of maximal variance, and is mostly concentrated in a single dimension, making it ideal for  \Pca, {\color{black}\Rrlda}, and sparse methods.
In this setting, the results are  similar for all methods, and essentially optimal.
\textbf{(B) Trunk} The mean difference vector is orthogonal to the direction of maximal variance; \Pca~performs worse and {\color{black}\Rrlda}~is at chance, but sparse methods and \Lol~can still recover the correct dimensions, achieving nearly optimal performance.
\textbf{(C) Rotated Trunk} Same as (B), but the data are rotated; in this case, only \Lol~performs well.
Note that \Lol~is closest to Bayes optimal in all three settings.
}
\label{f:cigars}
\end{figure}

Figure \ref{f:cigars}{\color{magenta}A} shows   ``stacked cigars'',  in which the difference between the means and  the direction of maximum variance are  large  and aligned with one another.
This is an idealized setting for \Pca, because \Pca~finds the direction of maximal variance, which happens to correspond to the direction of maximal separation of the classes.
{\color{black}\Rrlda} performs well here too, for the same reason that \Pca~does.
Because all dimensions are uncorrelated, and one dimension contains most of the information discriminating between the two classes, this is also an ideal scenario for sparse methods.
Indeed,  \Road, a sparse classifier designed for precisely this scenario,  does an excellent job finding the most useful dimensions \cite{Fan2012a}.
\Lol, using both the difference of means and the directions of maximal variance, also does well.
To calibrate all of these methods, we also show the performance of the optimal classifier.

Figure \ref{f:cigars}{\color{magenta}B} shows an example that is worse for  \Pca.
In particular, the variance is getting larger for subsequent dimensions,
while the magnitude of the difference between the means is decreasing with dimension.
Because \Pca~operates on the pooled sample covariance matrix, the dimensions with the maximum difference are included in the estimate, and therefore, \Pca~finds some of them, while also finding some of the dimensions of maximum variance.  The result is that \Pca~performs fairly well in this setting.
{\color{black}\Rrlda}, however, by virtue of subtracting out the difference of the means, is now completely at chance performance.
\Road~is not hampered by this problem; it is also able to find the directions of maximal discrimination, rather than those of maximal variance.
Again, \Lol, by using both the means and the covariance, does extremely well.

Figure \ref{f:cigars}{\color{magenta}C}  is exactly the same as Figure \ref{f:cigars}{\color{magenta}B}, except the data have been randomly rotated in all 1000 dimensions.  This means that none of the original features have much information, but rather, linear combinations of them do.
This is evidenced by observing the scatter plot, which shows that the first two dimensions  fail to disambiguate the two classes.
\Pca~performs even worse in this scenario than in the previous one.
{\color{black}\Rrlda}~is rotationally invariant (see Appendix~\ref{app:rot}  for details),  so still performs at chance levels.
Because there is no small number of features that separate the data well,  \Road~fails.
 \Lol~performs  as well here as it does in the other examples.


\subsection*{When is \Lol~Better than \Pca~and Other Supervised Linear Methods?}

We desire theoretical confirmation of the above numerical results.
To do so, we investigate when \Lol~is ``better'' than other linear dimensionality reduction techniques.
In the context of supervised dimensionality reduction or manifold learning, the goal is to obtain low dimensional representation that maximally separates the two classes, making subsequent classification easier.  Chernoff information quantifies the dissimilarity between two distributions. Therefore, we can compute the Chernoff information between distribution of the two classes after embedding to evaluate the quality of a given embedding strategy.
As it turns out, Chernoff information  is the exponential convergence rate for the Bayes error \cite{chernoff_1952}, and therefore, the tightest possible theoretical bound.   The use of Chernoff information to theoretically evaluate the performance of an embedding strategy is novel, to our knowledge, and leads to the following main result:

\makebox[\textwidth][c]{%
    \begin{minipage}[c]{0.9\textwidth}
\textbf{Main Theoretical Result}
\emph{\Lol~is always better than or equal to {\color{black}\Rrlda}~under the Gaussian model when $p \geq n$, and better than or equal to \Pca~(and  many other linear projection methods) with additional (relatively weak) conditions.  This is true for all possible observed dimensionalities of the data, and the number of dimensions into which we project, for sufficiently large sample sizes. Moreover, under relatively weak assumptions, these conditions almost certainly hold as the number of dimensions increases.}
\end{minipage}
}

Formal statements of the theorems and proofs required to substantiate the above result are provided in Appendix \ref{sec:main}.  The condition for \Lol~to be better than \Pca~is essentially that the $d^{th}$ eigenvector of the pooled sample covariance matrix has less information about classification than the difference of the means vector.
The implication of the above theorem is that it is better to incorporate the mean difference vector into the projection matrix, rather than ignoring it, under basically the same assumptions that motivate \Pca.  The {degree} of improvement is a function of 
the  dimensionality of the feature set $p$, 
the number of samples $n$, 
the projection dimension $d$, 
and the parameters, but the {existence} of an improvement---or at least no worse performance---is independent of those factors.

\subsection*{Flexibility and Accuracy of {\color{black}\Xox}~Framework}

We empirically investigate the flexibility and accuracy of \Xox~using simulations that extend beyond the theoretical claims.
For three different scenarios, we sample
$100$ training samples each with  $100$ features; therefore, Fisher's \Lda~cannot solve the problem (because there are infinitely many ways to overfit).
We consider a number of different methods, including 
\Pca, {\color{black}\Rrlda}, \Pls, \Road, Random Projections (\Rp), and \Cca~to project the data onto a low dimensional space.
After projecting the data, we train either \Lda~(for the first two scenarios) or  Quadratic Discriminant Analysis (\Qda, for the third scenario), which generalizes \Lda~by allowing each class to have its own covariance matrix  \cite{Hastie2004}.
For each scenario, we evaluate the misclassification rate on held-out data.

\begin{figure}[h!]
\centering
\includegraphics[width=1\linewidth]{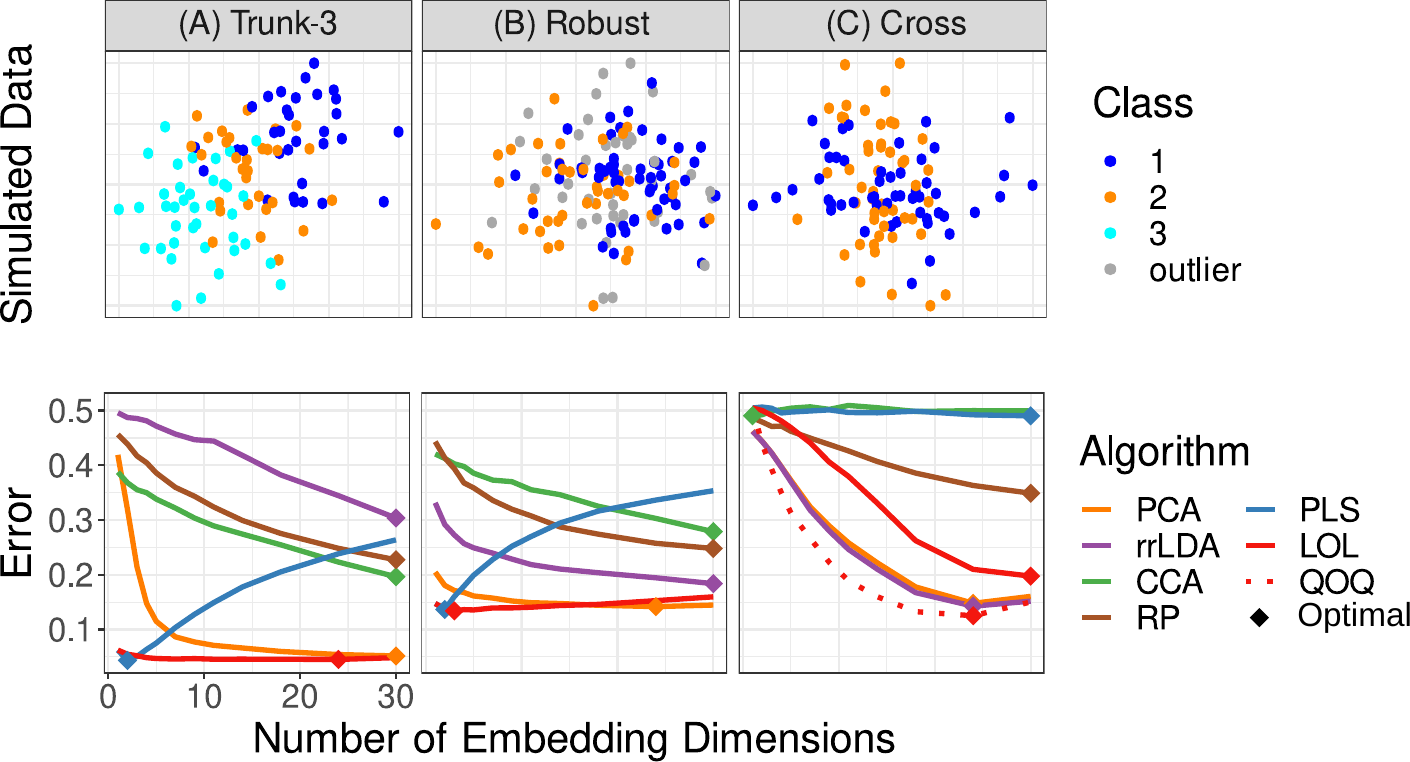}
\caption{
Three simulations demonstrating the flexibility and accuracy of \Xox~in settings beyond  current theorical claims.
For all cases, training sample size and dimensionality were both $100$. {\color{black}The top row} depicts the values of the sampled points for two of the 100 dimensions to illustrate the classification task. {\color{black}The bottom row} misclassification rate as a function of the number of projected dimensions, for several different embedding approaches. 
Classification is performed on the embedded data using the \Lda~classifier for (A) and (B), and using \Qda~for (C).
The simulation settings are:
\textbf{(A) Trunk-3} A variation of Figure {\ref{f:cigars}\color{magenta}{(B)}} in which 3 classes are present.
\textbf{(B) Robust} Outliers  are prominent in the sample while estimating the projection matrix. {\color{black}\Lol\ is robust to the outliers due to the robust estimate of the first moment.}
\textbf{(C) Cross} The two classes have the same mean but orthogonal covariances. Points are classified using the \Qda~classifier after projection. \Qoq, a variant of \Lol~where each class' covariance is incorporated into the projection matrix, outperforms other methods, as expected.
In essentially all  cases and dimensions, \Lol, or the appropriate generalization thereof, outperforms other approaches.
}
\label{f:properties}
\end{figure}
Figure~\ref{f:properties} shows a two-dimensional scatterplot (left) and misclassification rate versus dimensionality (right) for each simulation.  {\color{black} The top $C-1$ embedding dimensions for \Lol~ correspond to the performance after projection onto the class-conditional means, and \Rrlda~ corresponds to the performance of projection onto the class-conditional covariance matrix.}
Figure~\ref{f:properties}{\color{magenta}A} shows a three class generalization of the  Trunk example from Figure \ref{f:cigars}{\color{magenta}B}.
\Lol~can trivially be extended to more than two classes (see Section \ref{sec:LOL} for details), unlike \Road~which only operates in a two-class setting.
Figure~\ref{f:properties}{\color{magenta}B} shows a two-class example with many outliers, as is typical in modern biomedical datasets.  {\color{black}A variant of \Lol, ``Robust \Lol'' (\Rlol),
replaces the standard estimators of the mean and covariance with robust variants (the class-conditional medians and robust covariance \cite{cran2020Oct} respectively), thereby dramatically improving performance over \Lol~(and other techniques) in noisy settings. 
Hereafter, \Lol\ will refer to the version of \Lol\ with a robust estimate of the first moment, and a truncated estimate of the second moment, as a robust first moment tends to make little difference when a robust estimate was not necessary, and improved performance when a robust estimate was warranted. We do not use a robust estimate of the second moment, as typical robust estimates of the second moment available in standard numerical packages require $d < n$. which is unsuitable for wide data.
}
Figure~\ref{f:properties}{\color{magenta}C} shows an example that does not have an effective linear discriminant boundary because the two classes have orthogonal  covariances.
Another variant of \Lol, Quadratic Optimal \Qda~(\Qoq),
computes the eigenvectors separately for each class, concatenates them (sorting them according to their singular values), and then classifies with \Qda~instead of \Lol. 
For all three scenarios, either \Lol---or its extended variants \Rlol~and \Qoq---achieves a misclassification rate  comparable to or lower  than other methods, for all dimensions. 
%
{\color{black}These three results demonstrate how straightforward generalizations of \Lol~under the \Xox~framework which incorporate alternate or robust moment estimates} can dramatically improve performance over other projection methods. This is in marked contrast to other approaches, for which such flexibility is either not available, or otherwise problematic.

\subsection*{\Xox~is Computationally Efficient and Scalable}
\label{sec:speed}

When the dimensionality  is large (e.g., millions or billions), the main bottleneck  is sometimes merely the ability to run anything on the data, rather than its predictive accuracy.
We evaluate the computational efficiency and scalability of \Lol~in the simplest setting: two classes of spherically symmetric Gaussians (see Appendix~\ref{sec:simulations} for details) with dimensionality varying from 2 million to 128 million, and 1000 samples per class.
Because \Lol~admits a closed form solution, it can leverage highly optimized linear algebra routines rather than the costly iterative programming techniques currently required for sparse or dictionary learning type problems~\cite{Mairal2009}.
To demonstrate these computational capabilities, we built \sct{FlashLOL}, an efficient scalable \Lol~implementation with R bindings, to complement the R package  used for the above figures.

Four properties of \Lol~enable its scalable implementation. 
First, \Lol~is linear in both sample size and dimensionality (Figure \ref{f:speed}{\color{magenta}A}, solid red line).
Second, \Lol~is easily parallelizable using recent developments in ``semi-external memory'' \cite{FlashGraph, FlashMatrix, FlashEigen} 
(Figure \ref{f:speed}{\color{magenta}A}, dashed red line demonstrates that \Lol~is also linear in the number of cores).
Also note that \Lol~does not incur any meaningful additional computational cost over \Pca~(orange dashed line).
Third,  \Lol~can use  randomized approximate algorithms for eigendecompositions to further accelerate its performance~\cite{Candes2006b, Hastie2006}  (Figure \ref{f:speed}{\color{magenta}A}, orange lines).
\sct{FlashLFL}, short for Flash Low-rank \emph{Fast} Linear embedding, achieves an order of magnitude improvement in speed when using very sparse random projections instead of the eigenvectors.
Fourth, hyper-parameter selection for \Lol~is nested, meaning that once estimating the $d$-dimensional projection, every lower dimensional projection is automatically available.  This is in contrast to tuning the weight of a penalty term, which leads to a new optimization problem for each different parameter values.
Thus, the computational complexity of \Lol~is $\mc{O}(npd/Tc)$, where $n$ is sample size, $p$ is the dimension of the data, $d$ is the dimension of the projection, $T$ is the number of threads, and $c$ is the sparsity of the projection.

\begin{figure}[h!]
\centering
\includegraphics[width=0.8\linewidth]{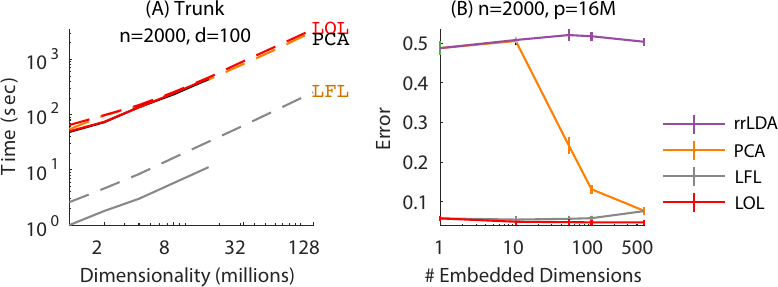}
\caption{
Computational efficiency and scalability of \Lol~using $n=2000$ samples from spherically symmetric  Gaussian data  (see Appendix~\ref{sec:simulations} for details).   \textbf{(A)} \Lol~exhibits optimal (linear) scale up, requiring only 46 minutes to find the projection on a 500 gigabyte dataset, and only 3 minutes using \Lfl~(dashed lines show semi-external memory performance). \textbf{(B)} Error for \Lfl~is the same as \Lol~in this setting, and both are significantly better than \Pca~and {\color{black}\Rrlda}~for all choices of projection dimension, {\color{black}regardless of whether a randomized approach is used to compute the projection dimensions.}
}
\label{f:speed}
\end{figure}

Finally, note that this simulation setting  is ideal for \Pca~and {\color{black}\Rrlda}, because the first principal component includes the mean difference vector.
Nonetheless, both \Lol~and \Lfl~achieve near optimal accuracy, whereas {\color{black}\Rrlda}~is at chance, and \Pca~requires 500 dimensions to even approach the same accuracy that \Lol~achieves with only one dimension. {\color{black}While \Pca~would also benefit efficiency wise from a randomized approach, we emphasize that \Lfl~maintains the high performance of \Lol~in comparison to \Pca~despite the randomization technique, with the benefit of greater computational efficiency compared to \Lol.}  

\subsection*{Real Data Benchmarks and Applications}

Real data often break the theoretical assumptions in more varied ways than the above simulations, and can provide a complementary perspective on the performance properties of different algorithms. We describe two sets of problems, one from brain imaging, and the other from genomics. In both cases we consider a classification problem.
To classify participants, researchers typically employ substantiative pre-processing pipelines \cite{discr} to reduce the dimensionality of the data. Unfortunately, as debates persist about the validity of pre-processing approaches, there is no defacto ``standard'' for the optimal strategies to pre-process the data. Traditional approaches typically include a deep processing chain, with many steps of parametric modeling and downsampling~\cite{mrcap, migraine, sic}.
We therefore investigate the possibility of directly classifying on the nearly raw, high-dimensional data.

The Consortium for Reliability and Reproducibility (CoRR) \cite{corr} has generated anatomical and diffusion magnetic resonance imaging  scans from $n > 800$ participants from $5$ processing sites, each featuring participant-specific annotations for the sex of each individual. At the native resolution, each brain volume is over $150$ million dimensions, and each dataset consists of between $42$ ($60$ GB of data) and $>400$ samples ($600$ GB of data). 

\begin{figure}[h!]
\centering
\includegraphics[width=\linewidth]{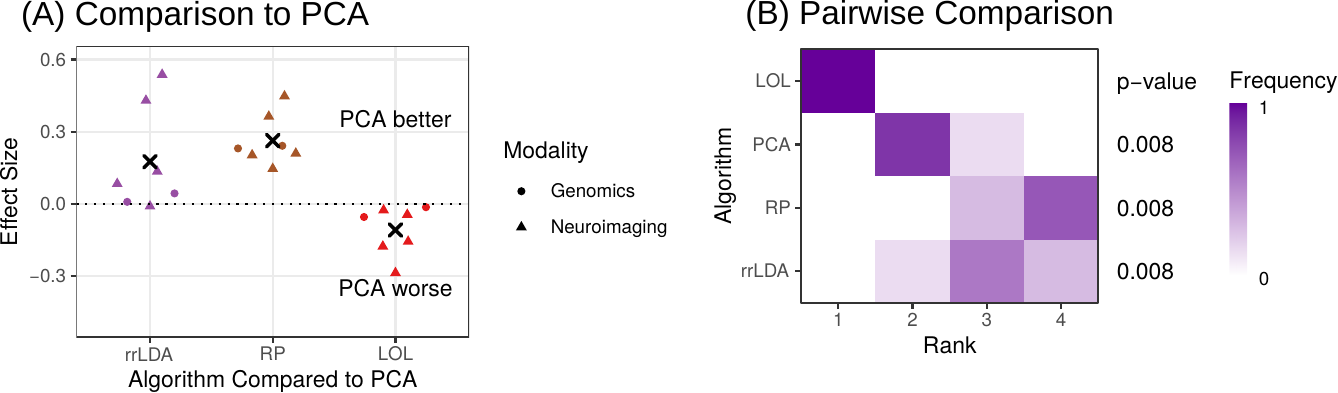}
\caption{Comparing various dimensionality reduction algorithms  on two real datasets: neuroimaging and genomics.
\textbf{(A)} Beeswarm plots show the classification performance of each technique with respect to \Pca~ at the optimal number of embedding dimensions, the number of embedding dimensions with the lowest misclassification rate. Performance is measured by the effect size, defined as $\kappa(\Lda \circ \Pca) - \kappa(\Lda \circ \textrm{embed})$, where $\kappa$ is Cohen's Kappa, and $\textrm{embed}$ is one of the embedding techniques compared to \Pca. Each point indicates the performance of \Pca~ relative the other technique on a single dataset, and the sample size-weighted average effect is indicated by the black ``x.'' \Lol~ always outperforms \Pca~and all other techniques.  \textbf{(B)} Frequency histograms of the relative ranks of each of the embedding techniques on each dataset after classification, where a $1$ indicates the best relative classification performance and a $4$ indicates the worst relative classification performance, after embedding with the technique indicated. Projecting first with \Lol~ provides a significant improvement over competing strategies (Wilcoxon signed-rank test, $n=7$, $p$-value=$.008$) on all benchmark problems.
}
\label{f:neuroimaging}
\end{figure}
\clearpage

We then also consider a large genomics dataset \cite{Douville2020Mar} consisting of  $340$ individuals:  $144$ patients with non-metastatic cancer and $196$ healthy controls, of which $198$ are male and $142$ are female. Samples are aligned to $>$750,000 amplicons distributed throughout the genome to investigate the presence of aneuploidy (abnormal chromosomal counts) in samples from cancer patients (see Appendix \ref{app:neuroimaging} for details). The raw amplicon counts are then  used with no further pre-processing.
We have two tasks of interest: classification on the basis of either sex or age. 
%

For each of the above described problems, we first compute an embedding matrix to project the training data using \Lol, \Pca, {\color{black}\Rrlda}, and \Rp, and then train \Lda~to classify the resulting low-dimensional representations. The \textit{held-out} set is then projected and classified using the embedding matrix and trained classifier respectively, and the average cross-validated error is computed over all folds of the data. For each problem, the optimal dimensionality for each strategy is selected to be the number of embedding dimensions with the lowest average cross-validated error. We compute Cohen's Kappa $\kappa$ to compare performance across methods because it normalizes the performance of the classification strategy between zero (the classifier is equivalent to the \texttt{random chance} classifier) and one (the classifier performs perfectly).
Finally, for each projection technique, we measure the effect size for each strategy as the difference $\kappa(\Pca) - \kappa(\textrm{embed})$. See Appendix \ref{app:neuroimaging} for a table detailing the datasets employed.

Our  \texttt{Flash}\Lol~implementations  are the only algorithms  that could  successfully run on these data with a single core on a standard desktop computer. In Figure \ref{f:neuroimaging}(A), \Lol~is the only technique to outperform \Pca~ on all problems. Figure \ref{f:neuroimaging}(B) shows the relative ranks of the average cross-validated misclassification rates for the \Lda~classifier on each dataset after projection with the specified embedding technique. For all problems, \Lol~is the technique with the lowest average cross-validated misclassification rate. Further, \Lol~performs significantly better than all other techniques (Wilcoxon signed-rank statistic, all $p$-values$=0.008$). The average misclassification rate achieved at the optimal number of embedding dimensions via \Lol~is between $5\%$ and $15\%$ across all datasets, which is the same performance we and others obtain using  extensively processed and downsampled data that is typically required on similar datasets \cite{Vogelstein2013,Duarte-Carvajalino2011}. \Lol~therefore enables researchers to side-step hotly debated pre-processing issues by hardly pre-processing at all,  and instead simply applying \Lol~to the data in its native dimensionality.

\section*{Discussion}

{\color{black}We have introduced a very simple methodology to improve performance on supervised learning problems with  wide data (that is, big data where dimensionality is at least as large as sample size) by using class-conditional moments to estimate a low rank projection under a generalized framework, \Xox. In particular, \Lol~uses both the difference of the means and the class-centered covariance matrices}, which enables it to outperform \Pca, as well as existing supervised linear classification schemes, in a wide variety of scenarios without incurring any meaningful additional computational cost. Straightforward generalizations enable robust and nonlinear variants by using robust estimators and/or class specific covariance estimators.
Our open source implementation optimally scales to terabyte datasets. Moreover, the intuition can be extended for both hypothesis testing and regression  (see Appendix \ref{sec:other} for additional numerical examples in these settings).

Two commonly applied approaches in these settings are partial least squares (\Pls) and canonical correlation analysis (\Cca). \Cca~is equivalent to {\color{black}\Rrlda}~whenever $p<n$, which is not of interest here.
When $p\ge n$, \Cca~and {\color{black}\Rrlda}~are not equivalent; however, in such settings, \Cca~exhibits the ``maximal data piling problem''~\cite{maximum1}  (see Appendix~\ref{sec:cca} for details).  Specifically, all the points in each class are projected onto the exact same point. This results in severe overfitting of the data, yielding poor empirical performance in essentially all settings we considered here (the first dimension of \Cca~is typically worse even than the difference of the means).  While \Pls~does not exhibit these problems, it lacks strong theoretical guarantees and simple geometric intuition.  In contrast to {\color{black}\Xox}, neither \Cca~nor \Pls~enable straightforward generalizations, such as when there are outliers or the discriminant boundary is quadratic (see Figure~\ref{f:properties}). Further, across all simulations, {\color{black}\Xox}~outperforms both of these approaches, sometimes quite dramatically (for example, {\color{black}\Xox}~outperforms \Cca~on over all of the simulations considered). Finally, no scalable nor parallelized implementations are readily available for these methods (see Figure~\ref{f:speed}). One could use stochastic gradient descent with penalties to solve these other optimization problems, but they would still need to tune the penalty parameter which would be quite computationally costly. Neither \Pls~nor \Cca~could be successfully run on the massive neuroimaging dataset nor the amplicon-level genomics dataset using readily-available tools.

%
Many previous investigations have addressed similar challenges.
The celebrated Fisherfaces paper was the first to compose Fisher's \Lda~with \Pca~(equivalent to \Pca~in this manuscript) \cite{Belhumeur1997a}.  The authors showed via a sequence of numerical experiments the utility of projecting the data using \Pca~prior to classifying with \Lda.  
We extend this work by adding a supervised component to the initial projection.  Moreover, we provide the geometric intuition for why and when incorporating supervision is advantageous, with numerous examples demonstrating its superiority, and theoretical guarantees formalizing when \Lol~outperforms \Pca.
The ``sufficient dimensionality reduction'' literature has  similar insights, but a different construction that typically requires the dimensionality to be smaller than the sample size \cite{Li1991a, Tishby1999a, Globerson2003a, Cook2005a,Fukumizu2004a} (although see \cite{Cook2013} for some promising work). More recently, communication-inspired classification approaches have yielded theoretical bounds on linear and affine classification performance \cite{Nokleby2015}; they do not, however, explicitly compare different projections, and the bounds we provide are more general and tighter.
Moreover, none of the above strategies have implementations that scale to millions or billions of features.
Recent big data packages are designed for millions or billions of  samples~\cite{Agarwal2014,abadi2016tensorflow}.  In biomedical sciences, however, it is far more common to have tens or hundreds of samples, and millions or billions of features (e.g., genomics or connectomics).

Most manifold learning methods, while exhibiting both strong theoretical \cite{Eckart1936a,deSilva2003, Allard2012} and empirical performance, are typically fully unsupervised.  Thus, in classification problems, they discover a low-dimensional representation of the data, ignoring the labels.  This approach can be highly problematic when the discriminant dimensions and the directions of maximal variance in the learned manifold are not aligned (see Figure \ref{f:mnist} for some examples).
Moreover, nonlinear manifold learning techniques tend to learn a mapping from the original samples to a low-dimensional space, but do not learn a projection, meaning that new samples cannot easily be mapped onto the low-dimensional space, a requirement for supervised learning. {\color{black}Deep learning methods~\cite{Goodfellow2016-ac} can easily be supervised, but they tend to require huge sample sizes, lack theoretical guarantees, or are opaque ``black-boxes'' that are insufficient for many biomedical applications.
This yields a dearth of ``out of the box'' supervised scalable dimensionality reduction techniques with strong theoretical guarantees with respect to classification performance bounds designed for wide datasets.}
 Random forests circumvent many of these problems, but implementations that operate on millions of dimensions do not exist~\cite{Tomita2017-mv}{\color{black}, and often produce embeddings that perform no better than \Pca~ on wide datasets (Figure \ref{f:neuroimaging}).} 
%

{\color{black}Recently discussed strategies have identified techniques for recovering the loading vectors for \Pca\ from the weights of a single-layer auto-encoder \cite{Plaut2018Apr}. Similarly, we believe there may be promise in devising a harmony between \Lol\ and supervised auto-encoder strategies to identify the spaces spanned by the first and second moments in unison. Successive techniques, such as unsupervised regularization of supervised autoencoders \cite{BibEntry2020Nov} and supervised dictionary learners \cite{BibEntry2013Sep}, may show promise for constructive development of the projection matrix through optimization, rather than estimation, techniques. Unfortunately, these approaches lack standard numerical packages for direct comparison, evaluation, and implementation. Future work may seek to highlight the similarities, or differences, possible through such techniques.}

Other approaches formulate an optimization problem, such as projection pursuit \cite{Huber1985a} and empirical risk minimization \cite{Belkin2006a}.  These methods are limited because they are prone to fall into local minima,  require costly iterative algorithms, lack any theoretical guarantees on classification accuracy \cite{Belkin2006a}.
Feature selection strategies,  such as higher criticism thresholding \cite{Donoho2008a} effectively filter the dimensions, possibly prior to performing \Pca~on the remaining features \cite{Bair2006}. These approaches could be combined with \Lol~in ultrahigh-dimensional problems.
Similarly, another  recently proposed supervised \Pca~variant builds on the elegant Hilbert-Schmidt independence criterion \cite{Gretton05} to learn an embedding \cite{Barshan2011}.  Our theory demonstrates that under the Gaussian model, composing this linear projection with the difference of the means will improve subsequent performance under general settings, implying that this will be a fertile avenue to pursue. A natural extension to this work would therefore be to estimate a Gaussian mixture model per class, rather than simply a Gaussian per class, and project onto the subspace spanned by the collection of all Gaussians.

%
{\color{black}In conclusion, the key \Xox~idea, appending class-conditional moment estimates} to convert unsupervised manifold learning to supervised manifold learning, has many potential applications and extensions.  {\color{black}We have presented the first few, including \Lol, \Qoq, and \Rlol, which demonstrated the flexibility of \Xox~under both theoretical and benchmark settings}. Incorporating additional nonlinearities {\color{black}via higher order moments}, kernel methods \cite{Mika1999a}, ensemble methods \cite{Cannings2015} such as random forests \cite{Breiman2001a}, and multiscale methods \cite{Allard2012}
are all of immediate interest.

{\color{black}
\paragraph{Data Availability} 
Data used within this manuscript are available from \url{https://neurodata.io/lol/} and \url{https://neurodata.io//mri}.}

{\color{black}
\paragraph{Code Availability} 
MATLAB,  R, and Python code for the experiments performed in this manuscript and a docker container for \texttt{FlashLOL} are available from \url{https://neurodata.io/lol/}, and an R package is available on the Comprehensive R Archive Network (CRAN)~\cite{Bridgeford2018-fq}.}


\clearpage
\pagestyle{empty}
\bibliographystyle{plainnat}
\bibliography{biblol}

\section*{Acknowledgements}

The authors are grateful for the support  by the XDATA program of the Defense Advanced Research Projects Agency (DARPA) administered through Air
Force Research Laboratory contract FA8750-12-2-0303;
DARPA GRAPHS contract N66001-14-1-4028; and
DARPA SIMPLEX program through SPAWAR contract N66001-15-C-4041, and DARPA Lifelong Learning Machines program through contract FA8650-18-2-7834.

\clearpage
\appendix

\section{Theoretical Background}
\label{sec:background}

\subsection{The Classification Problem}

Let $(\bX,Y)$ be a pair of random variables, jointly sampled from $F :=F_{\bX,Y}=F_{\bX|Y}F_{Y}$, with density denoted $f_{\bX,Y}$.
Let $\bX$ be a multivariate vector-valued random variable, such that its realizations live in p dimensional Euclidean space, $\bx \in \Real^p$.  Let $Y$ be a categorical random variable, whose realizations are discrete, {\color{black}$y \in \{0,\ldots C-1\}$}.  The goal of a classification problem is to find a function $g(\bx)$ such that its output tends to be the true class label $y$:
\begin{align*} 
g^*(\bx) := \argmax_{g \in \mc{G}} \PP[g(\bx) = y].
\end{align*}
When the joint distribution of the data is known, then the Bayes optimal solution is:
\begin{align}  \label{eq:R}
g^*(\bx) := \argmax_y f_{y,\bx} = \argmax_y f_{\bx|y}f_y =\argmax_y \{\log f_{\bx|y} + \log f_y \}
\end{align}
Denote expected misclassification rate of classifier $g$ for a given joint distribution $F$,
\begin{align*}
L^F_g := \EE[g(\bx) \neq y] := \int \PP[g(\bx) \neq y] f_{\bx,y} d\bx dy,
\end{align*}
where $\EE$ is the expectation, which in this case, is with respect to $F_{XY}$.
For brevity, we often simply write $L_g$, and we define $L_* := L_{g^*}$.

\subsection{Linear Discriminant Analysis (\Lda)}

Linear Discriminant Analysis (\Lda) is an approach to classification that uses a linear function of the first two moments of the distribution of the data.  More specifically, let $\bmu_j=\EE[F_{X|Y=j}]$ denote the class conditional mean, and let $\bSig=\EE[F_{X}^2]$ denote the joint covariance matrix, and the class priors are $\pi_j=\PP[Y=j]$.  {\color{black} Using this notation, we can define the \Lda~classifier:
\begin{align*}
g_{\Lda}(\bx)&:=\argmin_y \left[\frac{1}{2} (\bx-\bmu_y)\TT \bSig^{-1}(\bx-\bmu_y) - \log \pi_y\right],
\end{align*}
Let $L_{\Lda}^F$ be the expected misclassification rate of the above classifier for distribution $F$.
{\color{black}Assuming equal class prior and centered means, 
re-arranging a bit, we obtain}
\begin{align*}
g_{\Lda}(\bx) :=  \argmin_y \bx\TT \bSig^{-1} \bmu_y.
\end{align*}
In words, the  \Lda~classifier chooses the class that maximizes the magnitude of the projection of an input vector $\bx$ onto $\bSig^{-1} \bmu_y$.
When there are only two classes, letting $\bdel=\bmu_0-\bmu_1$, the above further simplifies to
\begin{align*}
g_{2\Lda}(\bx) :=  \II\{ \bx\TT \bSig^{-1} \bdel > 0 \}.
\end{align*}
Note that the equal class prior and centered means assumptions merely changes the threshold constant from $0$ to some other constant.}

\subsection{\Lda~Model}

A statistical model is  a family of distributions indexed by a parameter $\bth \in \bTh$, $\mc{F}_{\bth}=\{F_{\bth} : \bth \in \bTh \}$.
Consider the special case of the above where $F_{\bX|Y=y}$ is a multivariate Gaussian distribution,
$\mc{N}(\bmu_y,\bSig)$, where each class has its own mean, but all classes have the same covariance.
We refer to this model as the \Lda~model.
Let $\bth=(\bpi,\bmu,\bSig)$, and let $\bTh_{C-\Lda}=( \triangle_C, \Real^{p \times C},\Real_{\succ 0}^{p \times p})$, where $\bmu=(\bmu_1,\ldots, \bmu_C)$, $\triangle_C$ is the $C$ dimensional simplex, that is $\triangle_C = \{ \bx : x_i \geq 0 \forall i, \sum_i x_i = 1\}$, and $\Real_{\succ 0}^{p \times p}$ is the set of positive definite  $p \times p$ matrices. Denote
$\mc{F}_{\Lda}=\{F_{\bth} : \bth \in \bTh_{\Lda}\}$.
The following lemma is well known \cite{Carson2012Jun}:
\begin{lem}
$L_{\Lda}^F=L_*^F$ for any $F \in \mc{F}_{\Lda}$.
\end{lem}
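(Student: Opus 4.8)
The plan is to verify that under the \Lda~model the \Lda~classifier coincides with the Bayes rule, hence achieves the Bayes risk. This is a direct computation: I would start from the Bayes optimal classifier given in Equation~\eqref{eq:R}, namely $g^*(\bx)=\argmax_y\{\log f_{\bx\mid y}+\log f_y\}$, and plug in the Gaussian class-conditional density $f_{\bX\mid Y=y}$ corresponding to $\mc{N}(\bmu_y,\bSig)$ together with the prior $\pi_y$.

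First I would expand $\log f_{\bx\mid y}=-\tfrac12(\bx-\bmu_y)\TT\bSig^{-1}(\bx-\bmu_y)-\tfrac12\log\det(2\pi\bSig)$. Since the normalizing term $-\tfrac12\log\det(2\pi\bSig)$ does not depend on $y$ (all classes share $\bSig$), it drops out of the $\argmax$. Adding $\log\pi_y$ and negating to turn the maximization into a minimization, $g^*(\bx)=\argmin_y\{\tfrac12(\bx-\bmu_y)\TT\bSig^{-1}(\bx-\bmu_y)-\log\pi_y\}$. Comparing with the definition of $g_{\Lda}$ in the text, the only cosmetic discrepancy is that $g_{\Lda}$ is written using $(\bx-\bmu_0)$ and an indicator $\II\{Y=y\}\log\pi_y$; I would note that expanding the quadratic form $(\bx-\bmu_y)\TT\bSig^{-1}(\bx-\bmu_y)$ and discarding the $y$-independent piece $\bx\TT\bSig^{-1}\bx$ shows both expressions reduce to the same affine-in-$\bx$ discriminant $\argmin_y\{-\bx\TT\bSig^{-1}\bmu_y+\tfrac12\bmu_y\TT\bSig^{-1}\bmu_y-\log\pi_y\}$ (and the text's simplified forms follow under the stated centering/equal-prior assumptions). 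Therefore $g_{\Lda}(\bx)=g^*(\bx)$ for every $\bx$, so in particular $g_{\Lda}$ minimizes $\PP[g(\bx)\neq y]$ pointwise.

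Finally I would conclude at the level of risks: since $g_{\Lda}=g^*$ as functions, integrating the pointwise misclassification probability against $f_{\bx,y}$ gives $L_{\Lda}^F=L_{g^*}^F=L_*^F$ for any $F\in\mc{F}_{\Lda}$, which is the claim. I do not expect a genuine obstacle here — this is the standard ``Gaussian with common covariance $\Rightarrow$ linear Bayes rule'' fact, and the lemma is explicitly labelled ``well known.'' The only mild care needed is bookkeeping: making sure the quadratic term $\bx\TT\bSig^{-1}\bx$ and the determinant term are correctly identified as $y$-independent and dropped, and that one keeps track of the threshold constant (as the text itself flags, the equal-prior/centered-mean assumptions only shift the threshold, not the form of the rule). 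So the ``hard part'' is merely presentational: reconciling the several equivalent forms of $g_{\Lda}$ the text has written down with the Bayes rule, which I would handle by reducing all of them to the common affine discriminant above.
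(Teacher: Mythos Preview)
Your proposal is correct and follows exactly the approach the paper takes: the paper's proof consists of the single sentence ``Under the \Lda~model, the Bayes optimal classifier is available by plugging the explicit distributions into Eq.~\eqref{eq:R},'' and your write-up is simply a careful expansion of that plug-in computation. There is no substantive difference between the two.
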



\section{Formal Definition of \Lol~and Related Projection Based Classifiers}
\label{sec:LOL}

Let $\bA \in \Real^{d \times p}$ be a ``projection matrix'', that is, a
matrix that projects $p$-dimensional data into a $d$-dimensional subspace.
The question that motivated this work is: what is the best projection matrix that we can estimate, to use to ``pre-process'' the data prior to classifying the data?
Projecting the data $\bx$ onto a low-dimensional subspace, and then classifying via \Lda~in that subspace is equivalent to redefining the parameters in the low-dimensional subspace,
$\bSig_A=\bA \bSig \bA\TT \in \Real^{d \times d}$ and $\bdel_A = \bA \bdel \in \Real^d$, and then using $g_{\Lda}$ in the low-dimensional space.  When $C=2$, $\pi_0=\pi_1$, and $(\mu_0+\mu_1)/2=\mb{0}$, this amounts to:
\begin{align} \label{eq:g_A}
g^d_A(x) := \II \{ (\bA \bx)\TT \bSig^{-1}_A \bdel_A > 0\}, \text{ where } \bA \in \Real^{d \times p}.
\end{align}
Let ${\color{black}L^d_A :=\int \PP[g_A^d(\bx)=y] f_{\bx,y} d\bx dy}$.
Our goal therefore is to be able to choose $A$ for a given parameter setting $\bth=(\bpi, \bdel,\bSig)$, such that $L_A$ is as small as possible (note that $L_A$ will never be smaller than $L_*$).

In the naive case where $(\bSig_A, \bdel_A)$ are known, {\color{black}we seek to solve the following linear optimization problem:
\begin{equation}
\begin{aligned}
& \underset{\bA}{\text{minimize}}
& & \EE [ \II \{ \bx\TT \bA\TT \bSig^{-1}_A \bdel_A > 0\} \neq y] \\
& \text{subject to} & & \bA \in \Real^{d \times p}.  
\end{aligned}
\end{equation}

When $(\bSig_A, \bdel_A)$ are not known, however, the optimization problem becomes non-convex. With $\bSig_A$ and $\bdel_A$ as above:
\begin{equation}
\begin{aligned}\label{eq:A}
& \underset{\bA, \bSig, \bdel}{\text{minimize}}
& & \EE [ \II \{ \bx\TT \bA\TT \bSig^{-1}_A \bdel_A > 0\} \neq y] \\
& \text{subject to} & & \bA \in \Real^{d \times p}.  
\end{aligned}
\end{equation}
While there are numerous approaches to solve related convex optimization problems through various sets of assumptions \cite{Fan2012a,Mai2012Feb}, we do not consider such techniques in this manuscript theoretically. This is because assuming either a structure for $\bSig_A$ or $\bdel_A$ presupposes an understanding of the properties of the feature space for wide data, which is often unsuitable if the dataset is large or has considerable complexity.

Let $\mc{A} = \{\bA : \bA \in \Real^{k \times p}, k \leq p\}$, $\mc{A}^d=\{\bA : \bA \in \Real^{d \times p}
\}$, and let $\mc{A}_* \subset \mc{A}$ be the set of $\bA$  that minimizes Eq.~\eqref{eq:A},
and let $\bA_* \in \mc{A}_*$.   Let $L_{\bA}^*=L_{\bA_*}$ be the misclassification rate for any $\bA \in \mc{A}_*$, that is, $L_{\bA}^*$ is the Bayes optimal misclassification rate for the classifier that composes $\bA$ with \Lda.}

In our opinion, Eq.~\eqref{eq:A} is the simplest supervised manifold learning problem there is: a two-class classification problem, where the data are multivariate Gaussians with shared, unknown covariances, the manifold is linear, and the classification is done via \Lda.
Nonetheless, solving Eq.~\eqref{eq:A} is difficult, because we do not know how to evaluate the integral analytically, and we do not know any algorithms that are guaranteed to find the global optimum in finite time. 
We proceed by studying a few natural choices for $\bA$.

\subsection{Bayes Optimal Projection}

\begin{lem}
$\bdel\TT  \bSig^{-1} \in \mc{A}_*$
\end{lem}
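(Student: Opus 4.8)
The plan is to show that the one-dimensional projection $\bA = \bdel\TT \bSig^{-1} \in \Real^{1 \times p}$ achieves the Bayes optimal misclassification rate $L_*$, which forces it into $\mc{A}_*$ since no projection can do better than $L_*$. The key observation is that under the two-class \Lda~model with $\pi_0 = \pi_1$ and centered means, the Bayes optimal classifier (Lemma~1 together with Eq.~\eqref{eq:R}) is exactly $g_{2\Lda}(\bx) = \II\{\bx\TT \bSig^{-1}\bdel > 0\}$, whose decision depends on $\bx$ only through the scalar statistic $t(\bx) = \bdel\TT \bSig^{-1}\bx$. So I would argue that this scalar is a sufficient statistic for discriminating the two classes, hence projecting onto it loses no information.

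First I would compute the induced low-dimensional parameters for $\bA = \bdel\TT \bSig^{-1}$: we get $\bdel_A = \bA\bdel = \bdel\TT \bSig^{-1}\bdel$ (a positive scalar, call it $\rho$) and $\bSig_A = \bA \bSig \bA\TT = \bdel\TT \bSig^{-1}\bSig\bSig^{-1}\bdel = \bdel\TT \bSig^{-1}\bdel = \rho$ as well. Then I would substitute into the definition of the composed classifier in Eq.~\eqref{eq:g_A}:
\begin{align*}
g^1_A(\bx) = \II\{(\bA\bx)\TT \bSig_A^{-1}\bdel_A > 0\} = \II\{\rho^{-1}\,\rho\,(\bdel\TT \bSig^{-1}\bx) > 0\} = \II\{\bdel\TT \bSig^{-1}\bx > 0\}.
\end{align*}
This is literally $g_{2\Lda}(\bx)$, the Bayes optimal classifier for the two-class \Lda~model.

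Therefore $L^1_A = L_{g_{2\Lda}} = L_*$ by Lemma~1 (applied in the two-class, centered case). Since by definition $L_{\bA}^* = \min_{\bA} L_A \geq L_*$ and we have exhibited an $\bA$ attaining $L_*$, this $\bA$ lies in $\mc{A}_*$, completing the proof. I do not expect any serious obstacle here — the only thing to be careful about is bookkeeping the threshold constant: the paper already notes that the equal-prior / centered-mean assumptions only shift the threshold away from $0$, so in the general (uncentered, unequal-prior) case one should state the result for the appropriate affine version of $g_A$, but under the stated normalization the computation above is exact and the argument is immediate.
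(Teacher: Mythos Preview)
Your proposal is correct and follows essentially the same approach as the paper: substitute $\bA=\bdel\TT\bSig^{-1}$ into the projected classifier of Eq.~\eqref{eq:g_A} and observe that it collapses to $\II\{\bx\TT\bSig^{-1}\bdel>0\}=g_{2\Lda}(\bx)$, hence attains $L_*$. The only cosmetic difference is that you compute $\bSig_A=\bdel_A=\rho$ explicitly, while the paper simply remarks that $\bSig_A^{-1}\bdel_A>0$ is a positive scalar and therefore leaves the sign of $\bx\TT\bSig^{-1}\bdel$ unchanged.
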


\begin{proof}
Let $\bB = (\bSig^{-1} \bdel)\TT = \bdel\TT (\bSig^{-1})\TT = \bdel\TT \bSig^{-1}$, so that $\bB\TT = \bSig^{-1} \bdel$,
and plugging this in to Eq.~\eqref{eq:g_A}. {\color{black}By the above, and noting the symmetry and invertibility of $\bSig$:
\begin{align*}
    \bSig_B &= \bB \bSig\bB\TT = \bdel\TT \bSig^{-1}\bSig(\bdel\TT \bSig^{-1})\TT \\
    &= \bdel\TT \bSig^{-1}\bSig\bSig^{-1}\bdel = \bdel\TT \bSig^{-1}\bdel \\
    \Rightarrow \bSig_B^{-1} &= \bdel^{-1} \bSig\bdel^{\TT{-1}} \\
    \pmb \delta_B &= \bB \bdel = \bdel\TT \bSig^{-1}\bdel
\end{align*}
We obtain:
\begin{align*}
g_{B}(x) &= \II \{ \bx\TT \bB\TT  \bSig^{-1}_{B} \bdel_{B} > 0\} &
\\&= \II \{ \bx\TT (\bSig^{-1} \bdel) (\bSig^{-1}_{B} \bdel_{B}) > 0\} & \text{plugging in $\bB$} \\
&=\II\{ \bx\TT (\bSig^{-1} \bdel) (\bdel^{-1} \bSig\bdel^{\TT{-1}} \bdel\TT \bSig^{-1}\bdel) > 0\} & \text{plug in $\bSig_B, \bdel_B$ from above}
\\&= \II \{ \bx\TT \bSig^{-1} \bdel  > 0\}
\end{align*}}
In other words, letting $\bB$ be the Bayes optimal projection recovers the Bayes classifier, as it should.
Or, more formally, for any $F \in \mc{F}_{\Lda}$, $L_{\bdel\TT \bSig^{-1}} = L_*$.
\end{proof}

\subsection[PCA]{Principle Components Analysis (\Pca) Projection}
\label{sec:pca}

Principle Components Analysis (\Pca) finds the directions of maximal variance in a dataset.  \Pca~is closely related to eigendecompositions and singular value decompositions (\Svd).  In particular, the top left singular vector of a matrix $\bX \in \Real^{p \times n}$, whose columns are centered, is the eigenvector with the largest eigenvalue of the centered covariance matrix $\bX \bX\TT$.  \Svd~enables one to estimate this eigenvector without ever forming the outer product matrix, because \Svd~factorizes a matrix $\bX$ into $\bU \bS \bV\TT$, where  $\bU$ and $\bV$ are orthonormal  ${p \times n}$ matrices, and $\bS$ is a diagonal matrix, whose diagonal values are decreasing,
$s_1 \geq s_2 \geq \cdots > s_n$.  Defining $\bU =[\bu_1, \bu_2, \ldots, \bu_n]$, where each $\bu_i \in \Real^p$, then $\bu_i$ is the $i^{th}$ eigenvector, and $s_i$ is the square root of the $i^{th}$ eigenvalue of $\bX \bX\TT$.
Let $\bA^{\Pca}_d =[\bu_1, \ldots , \bu_d]$ be the truncated \Pca~orthonormal matrix, and let $\bI_{d \times p}$ denote a $d \times p$ dimensional identity matrix.

The \Pca~matrix is perhaps the most obvious choice of an orthonormal matrix for several reasons.  First, truncated \Pca~minimizes the squared error loss between the original data matrix and all possible rank d representations:
\begin{align*}
\argmin_{A \in \Real^{d \times p}} \norm{ \bX - \bA^T \bA }_F^2.
\end{align*}
Second, the ubiquity of \Pca~has led to a large number of highly optimized numerical libraries for computing \Pca~(for example, LAPACK \cite{Anderson1999a}).

In this supervised setting, we consider two different variants of \Pca, each based on centering the data differently.  For the first one, which we refer to as ``pooled \Pca'' (or just \Pca~for brevity), we center the data by subtracting the ``pooled mean'' from each sample, that is, we let $\mtb{x}_i=\mb{x}-\bmu$, where $\bmu=\EE[\mb{x}]$.
For the second, which we refer to as ``class conditional \Pca'', we center the data by subtracting the ``class-conditional mean'' from each sample, that is, we let $\mtb{x}_i = \mb{x} - \bmu_y$, where $\bmu_y=\EE[\mb{x} | Y=y]$.

Notationally, let $\bU_d=[\bu_1,\ldots,\bu_d] \in \Real^{p \times d}$, and note that $\bU_d\TT \bU_d = \bI_{d \times d}$ and $\bU_d \bU_d\TT  = \bI_{p \times p}$.
Similarly, let $\bU \bS \bU\TT = \bSig$,
and $\bU \bS^{-1} \bU\TT = \bSig^{-1}$.  Let $\bS_d$ be the matrix whose diagonal entries are the eigenvalues, up to the $d^{th}$ one, that is $\bS_d(i,j)=s_i$ for $i=j \leq d$ and zero otherwise.  Similarly, $\bSig_d=\bU \bS_d \bU\TT=\bU_d \bS_d \bU_d\TT$.
Reduced-rank \Lda~({\color{black}\Rrlda}) is a regularized \Lda~algorithm.  Specifically, rather than using the full rank covariance matrix, it uses a rank-d approximation.  Formally,
let $g_{\Lda} := \II \{ x \bSig^{-1} \bdel > 0\}$ be the  \Lda~classifier,
and let {\color{black}$g_{\Lda}^d := \II \{ x \bSig_d^{-1} \bdel > 0\}$} be the regularized \Lda~classifier, that is, the \Lda~classifier where the the bottom $p-d$ eigenvalues of the covariance matrix are set to zero.


\begin{lem}
Using class-conditional \Pca~to pre-process the data, then using \Lda~on the projected data, is equivalent to {\color{black}\Rrlda}. 
\end{lem}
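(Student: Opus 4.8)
The plan is to unwind both sides of the claimed equivalence down to the \emph{same} explicit classifier on $\Real^p$, so that equality of misclassification rates is immediate. First I would pin down what ``class-conditional \Pca'' produces: under the \Lda~model the class-conditionally centered covariance $\EE[(\bx-\bmu_y)(\bx-\bmu_y)\TT]$ equals $\bSig$ exactly, so this procedure simply returns the top $d$ eigenvectors of $\bSig$. In the notation of Section~\ref{sec:pca}, writing $\bSig=\bU\bS\bU\TT$ with eigenvalues $s_1\ge\cdots\ge s_p$ and $\bU_d=[\bu_1,\dots,\bu_d]\in\Real^{p\times d}$, the projection matrix is (the transpose of) $\bA^{\Pca}_d=\bU_d$, i.e.\ $\bA=\bU_d\TT\in\Real^{d\times p}$. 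I would state the lemma at the population level for cleanliness, remarking that the sample version is identical with $\bSig$ replaced by the (unbiased) class-centered sample covariance, provided $d\le n-C$ so that the top $d$ eigenvectors are well defined.

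Second, I would substitute $\bA=\bU_d\TT$ into the projected-\Lda~classifier of Eq.~\eqref{eq:g_A}. Since $\bU\TT\bU_d$ is the $p\times d$ matrix whose top $d\times d$ block is $\bI$ and whose remaining rows vanish, one gets $\bSig_A=\bU_d\TT\bSig\bU_d=\bU_d\TT\bU\bS\bU\TT\bU_d$, which collapses to the $d\times d$ diagonal matrix $\widetilde{\bS}_d$ carrying the leading eigenvalues $s_1,\dots,s_d$; likewise $\bdel_A=\bU_d\TT\bdel$. Plugging these back into Eq.~\eqref{eq:g_A},
\[
g^d_{\bA}(\bx)=\II\{\bx\TT\bU_d\,\widetilde{\bS}_d^{-1}\,\bU_d\TT\bdel>0\}.
\]

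Third, I would recognize $\bU_d\,\widetilde{\bS}_d^{-1}\,\bU_d\TT=\bU\bS_d^{-1}\bU\TT$, where $\bS_d^{-1}$ is the $p\times p$ diagonal matrix with entries $1/s_i$ for $i\le d$ and $0$ otherwise --- i.e.\ precisely the rank-$d$ truncated inverse covariance $\bSig_d^{-1}$ in the definition of \Rrlda. Hence $g^d_{\bA}=\II\{\bx\TT\bSig_d^{-1}\bdel>0\}=g^d_{\Lda}$ as functions on $\Real^p$, so $L^d_{\bA^{\Pca}_d}=L_{\Rrlda}^d$, which is the claim. (The equal-prior, centered-mean normalization only shifts the threshold, exactly as in the \Lda~and Bayes-projection lemmas above, so it plays no role.)

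This argument is essentially bookkeeping; the one place to be careful is the convention for $\bSig_d^{-1}$: ``setting the bottom $p-d$ eigenvalues to zero'' in \Rrlda~must be read as inverting the top $d$ eigenvalues and zeroing the tail (the Moore--Penrose pseudo-inverse of $\bSig_d$), since the literal inverse is undefined; with that reading the identification in step three is exact. A secondary subtlety is that the covariance \Lda~uses after projection is $\bA\bSig\bA\TT$, which at the population level is literally the covariance of $\bA\bx$, so there is no re-estimation discrepancy, and since $\bU_d$ is orthonormal the projection introduces no spurious scaling. No deeper obstacle arises.
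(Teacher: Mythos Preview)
Your proposal is correct and follows essentially the same route as the paper: substitute $\bA=\bU_d\TT$ into the projected-\Lda~discriminant of Eq.~\eqref{eq:g_A}, use the eigendecomposition $\bSig=\bU\bS\bU\TT$ together with $\bU\TT\bU_d=\bI_{p\times d}$ to collapse $\bSig_A$ to the diagonal of leading eigenvalues, and identify the result with $\bx\TT\bSig_d^{-1}\bdel$. Your write-up is in fact a bit more careful than the paper's, since you explicitly flag the Moore--Penrose reading of $\bSig_d^{-1}$ and the population-versus-sample caveat, whereas the paper's chain of equalities silently uses $(\bA\bSig\bA\TT)^{-1}=\bA\bSig^{-1}\bA\TT$, an identity that holds here only because the rows of $\bA$ are eigenvectors of $\bSig$.
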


\begin{proof}
Plugging $\bU_d$ into Eq.~\eqref{eq:g_A} for $\bA$, and considering only the left side of the operand, we have
\begin{align*}
(\bA \bx)\TT \bSig^{-1}_A \bdel_A &= \bx\TT \bA\TT \bA \bSig^{-1} \bA\TT \bA \bdel,
\\&= \bx\TT  \bU_d\bU_d\TT \bSig^{-1} \bU_d\bU_d\TT \bdel,
\\&= \bx\TT  \bU_d \bU_d\TT \bU \bS^{-1} \bU\TT \bU_d\bU_d\TT \bdel,
\\&= \bx\TT  \bU_d \bI_{d \times p} \bS^{-1} \bI_{p \times d} \bU_d\TT \bdel,
\\&= \bx\TT  \bU_d \bS^{-1}_d  \bU_d\TT \bdel ,
\\&= \bx\TT  \bSig^{-1}_d  \bdel.
\end{align*}
\end{proof}

The implication of this lemma is that if one desires to implement {\color{black}\Rrlda}, rather than first learning the eigenvectors and then learning \Lda, one can instead directly implement regularized \Lda~by setting the bottom $p-d$ eigenvalues to zero.  This latter approach removes the requirement to run \Svd~twice, therefore reducing the computational burden as well as the possibility of numerical instability issues. We therefore refer to the projection composed of $d$  eigenvectors of the class-conditionally centered covariance matrices, $\bA_{\Lda}^d$.

\subsection[LOL]{Linear Optimal Low-Rank (\Lol) Projection}

The basic idea of \Lol~is to use both $\bdel$ and the top $d$ eigenvectors of the class-conditionally centered covariance.
When there are only two classes, $\bdel=\bmu_0-\bmu_1$.  When there are $C>2$ classes, there are $\binom{C}{2}=\frac{C!}{2 (C-2)!}$ pairwise combinations, $\bdel_{ij}=\bmu_i - \bmu_j$ for all $i\neq j$.   However, since $\binom{C}{2}$ is nearly $C^2$, when $C$ is large, this would mean incorporating many mean difference vectors.  Note that $[\bdel_{1,2}, \bdel_{1,3},\ldots, \bdel_{C-1,C}]$
is in fact a rank $C-1$ matrix, because it is a linear function of the $C$ different means. Therefore, we only need $C-1$ differences to span the space of all pairwise differences.  To mitigate numerical instability issues, we adopt the following convention.  For each class, estimate the expected mean and the number of samples per class, $\bmu_c$ and $\pi_c$.  Sort the means in order of decreasing $\pi_c$, so that $\pi_{(1)} > \pi_{(2)} > \cdots > \pi_{(C)}$.  Then, subtract $\bmu_{(1)}$ from all other means: $\bdel_i = \bmu_{(1)}-\bmu_{(i)}$, for $i=2,\ldots, C$.  Finally, {\color{black}$\bdel=[\bdel_2,\ldots, \bdel_C]$.}

Given $\bdel$ and $\bA_{\Lda}^{d-1}$, to obtain \Lol~na\"ively, we could simply concatenate the two, {\color{black}$\bA_{\Lol, naive}^d=[\bdel,\bA_{\Lda}^{d-1}]$.}
Recall that eigenvectors are orthonormal.  To maintain orthonormality between the eigenvectors and vectors of $\bdel$, we could easily apply Gram-Schmidt,  {\color{black}$\bA_{\Lol, naive}^d=$} \sct{Orth}$([\bdel, \bA_{\Lda}^{d-1}])$.
In practice, this orthogonalization step does not matter much, so we ignore it hereafter.
To ensure that $\bdel$ and $\bSig$ are balanced appropriately, we normalize each vector in $\bdel$ to have norm unity.  Formally, let $\mt{\bdel}_j = \bdel_j / \norm{\bdel_j}$, where $\bdel_j$ is the $j^{th}$ difference of the mean vector
 and let  $\bA_{\Lol}^d=[\mt{\bdel}, \bA_{\Lda}^{d-(C-1)}]$.

When the distribution of the data is not provided, each of the above terms must be estimated from the data.  We use the maximum likelihood estimators for each, specifically:
\begin{align}
n_c &= \sum_{i = 1}^n \II\{y_i = c\},\\
\mh{\pi}_c &= \frac{n_c}{n}, \\
\mh{\bmu} &= \frac{1}{n} \sum_{i=1}^n \mb{x}_i, \\
\mh{\bmu}_c &= \frac{1}{n_c} \sum_{i=1}^n \mb{x}_i \II\{y_i=c\}.
\end{align}

For completeness, below we provide pseudocode for learning the sample version of \Lol. The population version does not require the estimation of the parameters.


\subsection{{\color{black}\Rrlda}~is rotationally invariant}
\label{app:rot}

For certain classification tasks, the observed dimensions (or features) have intrinsic value, e.g. when simple interpretability is desired.  However, in many other contexts, interpretability is less important \cite{Breiman2001b}.  When the exploitation task at hand is invariant to rotations, then we have no reason to restrict our search space to be sparse in the observed dimensions. For example, we can consider sparsity in the eigenvector basis.
Let  $\bW$ be a rotation matrix, that is $\bW \in \mc{W}=\{\bW : \bW\TT = \bW^{-1}$ and det$(\bW)=1\}$.
Moreover, let $\bW \circ F$ denote the distribution $F$ after transformation by an operator $\bW$.  For example, if $F=\mc{N}(\bmu,\bSig)$ then $\bW \circ F=\mc{N}(\bW  \bmu, \bW \bSig \bW\TT)$.

\begin{defi}
A rotationally invariant classifier has the following property:
$$L_g^F = L_g^{W \circ F}, \qquad F \in \mc{F} \text{ and } W \in \mc{W}.$$
In words, the Bayes risk of using classifier $g$ on distribution $F$ is unchanged if $F$ is first rotated. 
\end{defi}

Now, we can state the main lemma of this subsection:  {\color{black}\Rrlda}~is rotationally invariant.
\begin{lem} \label{l:rot}
$L_{\Lda}^F = L_{\Lda}^{W \circ F}$, for any $F \in \mc{F}$.
\end{lem}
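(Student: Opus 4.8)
The plan is to show that the \Lda~classifier is \emph{equivariant} under rotations: fitting \Lda~to the rotated distribution $\bW \circ F$ and then evaluating it at a rotated point $\bW \bx$ produces the same label as fitting \Lda~to $F$ and evaluating at $\bx$. Since a rotation is a measure-preserving bijection of $\Real^p$, this equivariance forces the two misclassification rates to coincide. (Here, and throughout, $g_{\Lda}^F$ denotes the \Lda~classifier whose parameters are those of $F$.)

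First I would record how the \Lda~parameters transform. Since $\bW \in \mc{W}$ satisfies $\bW\TT = \bW^{-1}$ and $\det \bW = 1$, if $F$ has first two moments $(\bmu,\bSig)$ and priors $\bpi$, then $\bW \circ F$ has class means $\bW\bmu_y$, common covariance $\bW\bSig\bW\TT$, and unchanged priors $\bpi$; moreover $(\bW\bSig\bW\TT)^{-1} = \bW\bSig^{-1}\bW\TT$. Plugging these into the definition of $g_{\Lda}$ and using $\bW\TT\bW = \bI$, every term entering the \Lda~discriminant is preserved: quadratic terms collapse as $(\bW\bx - \bW\bmu_y)\TT(\bW\bSig\bW\TT)^{-1}(\bW\bx - \bW\bmu_y) = (\bx - \bmu_y)\TT\bSig^{-1}(\bx-\bmu_y)$, and linear terms as $(\bW\bx)\TT(\bW\bSig\bW\TT)^{-1}(\bW\bmu_y) = \bx\TT\bSig^{-1}\bmu_y$. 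Hence the discriminant scores, and therefore the $\argmin$ defining the classifier, are identical, giving the equivariance identity $g_{\Lda}^{\bW\circ F}(\bW\bx) = g_{\Lda}^F(\bx)$ for every $\bx$.

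Then I would push this through the risk. Under $\bW \circ F$ the data pair is $(\bW\bX, Y)$ with $(\bX,Y)\sim F$, equivalently the joint density is $f_{\bW\TT\bx,\, y}$ (the $|\det\bW| = 1$ factor is trivial). Therefore
\begin{align*}
L_{\Lda}^{\bW\circ F} &= \int \PP\!\left[g_{\Lda}^{\bW\circ F}(\bx) \neq y\right] f_{\bW\TT\bx,\, y}\, d\bx\, dy = \int \PP\!\left[g_{\Lda}^{\bW\circ F}(\bW\bz) \neq y\right] f_{\bz,\, y}\, d\bz\, dy \\ &= \int \PP\!\left[g_{\Lda}^{F}(\bz) \neq y\right] f_{\bz,\, y}\, d\bz\, dy = L_{\Lda}^F,
\end{align*}
where the second equality is the change of variables $\bz = \bW\TT\bx$ (Jacobian $1$) and the third is the equivariance identity. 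Alternatively, if $F \in \mc{F}_{\Lda}$ the result is immediate from the preceding lemma: $L_{\Lda}^F = L_*^F$, and the Bayes risk is obviously invariant under the bijective reparametrization $\bx \mapsto \bW\bx$, so $L_{\Lda}^F = L_*^F = L_*^{\bW\circ F} = L_{\Lda}^{\bW\circ F}$.

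There is no genuine obstacle here; the only point requiring care is notational, namely that $g_{\Lda}$ is not a fixed function but depends on the distribution it is fit to, so the statement really compares "\Lda~fit to $F$, tested on $F$" with "\Lda~fit to $\bW\circ F$, tested on $\bW\circ F$". Once that is made explicit, every step reduces to cancelling $\bW\TT\bW = \bI$ and a unit-Jacobian substitution.
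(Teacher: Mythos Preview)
Your proof is correct and follows essentially the same approach as the paper: both establish that the \Lda~discriminant is invariant under rotation by cancelling $\bW\TT\bW = \bI$ inside the quadratic/linear form. You are more explicit than the paper in carrying this through to the risk via a unit-Jacobian change of variables, whereas the paper simply shows $(\bW\bx)\TT(\bW\bSig\bW\TT)^{-1}\bW\bdel = \bx\TT\bSig^{-1}\bdel$ (computing the inverse via the SVD of $\bSig$) and leaves the passage to $L_{\Lda}$ implicit.
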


\begin{proof}
{\color{black}\Rrlda}~is in fact simply thresholding $\bx\TT \bSig^{-1} \bdel$ whenever its value is larger than some constant.
Thus, we can demonstrate rotational invariance by demonstrating that $\bx\TT \bSig^{-1} \bdel$ is rotationally invariant.


\begin{align*}
(\bW \bx) \TT  (\bW \bSig \bW\TT )^{-1} \bW \bdel  
&= \bx\TT \bW\TT  (\bW \bU \bS \bU\TT \bW\TT)^{-1} \bW \bdel & \text{by substituting $\bU \bS \bU\TT$ for $\bSig$} \\
&= \bx\TT \bW\TT  (\mt{\bU} \bS \mt{\bU}\TT)^{-1} \bW \bdel & \text{by letting $\mt{\bU}=\bW \bU$} \\
&= \bx\TT \bW\TT  (\mt{\bU} \bS^{-1} \mt{\bU}\TT) \bW \bdel & \text{by the laws of matrix inverse} \\
&= \bx\TT \bW\TT  \bW \bU \bS^{-1}  \bU\TT \bW\TT \bW \bdel & \text{by un-substituting $\bW \bU=\mt{\bU}$} \\
&= \bx\TT  \bU \bS^{-1}  \bU\TT  \bdel  & \text{because $\bW\TT \bW = \bI$} \\
&= \bx\TT   \bSig^{-1} \bdel & \text{by un-substituting $\bU \bS^{-1} \bU\TT = \bSig$}
\end{align*}
\end{proof}

One implication of this lemma is that we can reparameterize without loss of generality.  Specifically, defining $\bW := \bU\TT$ yields a change of variables: $\bSig \mapsto \bS$ and $\bdel \mapsto \bU\TT \bdel := \bdel''$, where $\bS$ is a diagonal covariance matrix.  Moreover, let $\bd=(\sigma_1,\ldots, \sigma_D)\TT$ be the vector of eigenvalues, then
$\bS^{-1} {\bdel'}=\bd^{-1} \odot \mt{\bdel}$, where $\odot$ is the Hadamard (entrywise) product.  The \Lda~classifier may therefore be encoded by a unit vector, $\mt{\bd}:= \frac{1}{m} \bd^{-1} \odot \mt{\bdel'}$, and its magnitude, $m:=\norm{\bd^{-1} \odot \mt{\bdel}}$.
This will be useful later.

\subsection{Rotation of Projection Based Linear Classifiers}

By a similar argument as above, one can easily show that:

\begin{align*}
(\bA  \bW \bx) \TT  (\bA \bW  \bSig  \bW\TT \bA\TT)^{-1} \bA \bW \bdel
&= \bx\TT (\bW\TT \bA\TT) (\bA \bW) \bSig^{-1} (\bW\TT \bA\TT) (\bA \bW) \bdel \\
&= \bx\TT \bY\TT \bY \bSig^{-1} \bY\TT \bY \bdel \\
&= \bx\TT \bZ \bSig^{-1} \bZ\TT \bdel \\
&= \bx\TT (\bZ \bSig \bZ\TT)^{-1} \bdel = \bx\TT \mt{\bSig}_d^{-1} \bdel,
\end{align*}
where $\bY = \bA \bW \in \Real^{d \times p}$ so that $\bZ=\bY\TT \bY$ is a symmetric ${p \times p}$ matrix of rank $d$.  In other words, rotating and then projecting is equivalent to a change of basis.
The implications of the above is:
\begin{lem}
$g_A$ is rotationally invariant if and only if span($\bA$)=span($\bSig_d$).
In other words, {\color{black}\Rrlda}~is the only rotationally invariant projection.
\end{lem}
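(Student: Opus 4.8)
The plan is to show that the risk of $g_A$ is a strictly monotone function of a single scalar — the squared Mahalanobis separation of the two class means after projection — and that rotating the distribution merely replaces the projection subspace by its rotate; rotational invariance then becomes the demand that this scalar be constant along the whole rotation-orbit of the subspace, which forces the subspace to be all of $\Real^p$, i.e. $g_A$ to be $\Lda$.

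First I would observe that $g_A$ in Eq.~\eqref{eq:g_A} depends on $\bA$ only through its row space: for invertible $\bM \in \Real^{d\times d}$ one has $(\bM\bA)\TT\big((\bM\bA)\bSig(\bM\bA)\TT\big)^{-1}(\bM\bA) = \bA\TT(\bA\bSig\bA\TT)^{-1}\bA$, so without loss of generality $\bA$ has orthonormal rows, and I set $S := \mathrm{span}(\bA\TT)$, $d := \dim S$. Projecting the $\Lda$ model by $\bA$ yields in $\Real^d$ a balanced two-class Gaussian with means $\pm\tfrac12\bA\bdel$ and common covariance $\bA\bSig\bA\TT \succ 0$; by the optimality lemma of Appendix~\ref{sec:background} applied in $\Real^d$, $g_A$ attains the Bayes risk of this projected problem, so
\begin{align*}
L^F_{g_A} = \ell\big(\rho(S;\bdel,\bSig)\big), \qquad \rho(S;\bdel,\bSig) := \bdel\TT\bA\TT(\bA\bSig\bA\TT)^{-1}\bA\bdel,
\end{align*}
where $\ell$ is a fixed, strictly decreasing function and $\rho$ is well defined (it depends on $\bA$ only through $S$, by the preceding identity). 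Applying the same computation to $\bW\circ F = (\bW\bdel,\bW\bSig\bW\TT)$, expanding, and again using that $\rho$ depends only on the row space together with $\mathrm{span}\big((\bA\bW)\TT\big) = \bW\TT S$, one gets $\rho(S;\bW\bdel,\bW\bSig\bW\TT) = \rho(\bW\TT S;\bdel,\bSig)$, hence $L^{\bW\circ F}_{g_A} = \ell\big(\rho(\bW\TT S;\bdel,\bSig)\big)$. Since $\ell$ is strictly monotone, $g_A$ is rotationally invariant if and only if $\rho(\bW\TT S;\bdel,\bSig) = \rho(S;\bdel,\bSig)$ for every $\bW \in \mc{W}$ and every admissible $(\bdel,\bSig)$.

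For the ``only if'' direction I would specialize to $\bSig = \bI$, where $\rho(T;\bdel,\bI) = \norm{\bP_T\bdel}^2$ with $\bP_T$ the Euclidean orthogonal projector onto $T$; the invariance condition then reads $\norm{\bP_{\bW\TT S}\bdel}^2 = \norm{\bP_S\bdel}^2$ for all $\bdel$, which forces $\bP_{\bW\TT S} = \bP_S$ and hence $\bW\TT S = S$ for every $\bW \in \mc{W} = SO(p)$. Since $SO(p)$ acts transitively on the $d$-dimensional subspaces of $\Real^p$ whenever $0 < d < p$, no proper nonzero subspace is fixed by all of $SO(p)$; discarding the degenerate case $S=\{0\}$, this forces $S = \Real^p$, i.e. $\bA$ has full rank $p$, which is precisely $\mathrm{span}(\bA) = \Real^p = \mathrm{span}(\bSig_d)$ (necessarily with $d = p$). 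Conversely, when $\bA$ has rank $p$ it is invertible, so $\bA\TT(\bA\bSig\bA\TT)^{-1}\bA = \bSig^{-1}$ and $g_A(\bx) = \II\{\bx\TT\bSig^{-1}\bdel > 0\} = g_{2\Lda}(\bx)$, which is rotationally invariant by Lemma~\ref{l:rot}; and any $\bA$ with $\mathrm{span}(\bA) = \mathrm{span}(\bSig_d)$ has this form. This gives the claimed equivalence, and with it the statement that $\Lda$ is the only rotationally invariant projection.

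The step I expect to be the main obstacle is the bookkeeping that establishes $L^F_{g_A} = \ell(\rho(S;\bdel,\bSig))$ with $\ell$ \emph{strictly} decreasing and $\rho$ depending on $\bA$ only through $S$, together with the transformation law $\rho(S;\bW\bdel,\bW\bSig\bW\TT) = \rho(\bW\TT S;\bdel,\bSig)$. These are elementary but need care that $\bA\bSig\bA\TT$ is genuinely invertible (it is, since $\bSig \succ 0$ and $\bA$ has full row rank, so no pseudoinverse subtleties intrude) and that the two-Gaussian Bayes risk is a \emph{strictly} decreasing — not merely nonincreasing — function of the Mahalanobis separation. Once those are in hand, reducing to $\bSig = \bI$ and invoking transitivity of $SO(p)$ on fixed-dimensional subspaces finishes the proof routinely.
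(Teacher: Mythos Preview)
Your argument is correct and in fact more complete than what the paper offers. The paper does not give a self-contained proof of this lemma: the paragraph preceding it records the algebraic identity that applying $g_A$ to the rotated distribution $\bW\circ F$ is the same as applying $g_{\bA\bW}$ to $F$---which is your transformation law $\rho(S;\bW\bdel,\bW\bSig\bW\TT)=\rho(\bW\TT S;\bdel,\bSig)$ in different clothing---and then simply states the lemma as an ``implication'' of that identity, without further argument. You supply the step the paper omits: expressing the risk as a strictly monotone function of the projected Mahalanobis separation, specializing to $\bSig=\bI$ to obtain $\bW\TT S=S$ for every rotation $\bW\in\mc{W}$, and invoking the transitivity of $\mc{W}=SO(p)$ on $d$-dimensional subspaces to force $S=\Real^p$, hence $d=p$. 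Your remark that the condition $\mathrm{span}(\bA)=\mathrm{span}(\bSig_d)$ can only be satisfied with $d=p$ (so that both sides equal $\Real^p$) also sharpens the paper's somewhat loose phrasing; the intended content is exactly your conclusion that full-rank \Lda\ is the unique rotationally invariant projection-based classifier in this family.
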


\subsection{Low-Rank Canonical Correlation Analysis}
\label{sec:cca}

We now contrast \Lol~and low-rank CCA. For discriminant analysis, low-rank CCA corresponds to finding the eigenvectors of $S_{X}^{\dagger} S_{B}$ where $$S_{X} = \sum_{i} (X_i - \bar{X}) (X_i - \bar{X})^{\top}; \quad \bar{X} = \sum_{i} X_i $$
is the sample covariance matrix of the $X_i$, $S_{X}^{\dagger}$ is the inverse of $S_X$ (or Moore-Penrose pseudo-inverse of $S_X$ if $S_X$ is not invertible), and
$$ S_{B} = \frac{n_0}{n} (\bar{X}_0 - \bar{X}) (\bar{X}_0 - \bar{X})^{\top} + \frac{n_1}{n} (\bar{X}_1 - \bar{X}) (\bar{X}_1 - \bar{X})^{\top}; \quad \bar{X}_{j} = \sum_{i \colon Y_i = j} X_i \,\, \text{for $j \in \{0,1\}$} $$
is the between class covariance matrix \cite{Shin11}. It is widely known (see section 11.5 of \cite{mardia}) that if $S_{X}$ is invertible then the above formulation reduces to that of Fisher \:L, namely that of finding $\hat{v}$ satisfying
\begin{gather*} \hat{v} = \argmax_{v \not = 0} \frac{v^{\top} S_B v}{v^{\top} S_W v} \\ S_W = \sum_{i \colon Y_i = 0} (X_i - \bar{X}_0) (X_i - \bar{X}_0)^{\top} + \sum_{i \colon Y_i = 1} (X_i - \bar{X}_1) (X_i - \bar{X}_1)^{\top};
\end{gather*}
where $S_W$ is the pooled
within-sample covariance matrix and $S_X = S_W + S_B$.
In the context of our current paper where $X$ is assumed to be high-dimensional, it is well-known that $S_X$ is not a good estimator of the population covariance matrix $\Sigma_X = \mathbb{E}[(X - \mu) (X - \mu)^{\top}]$ and thus computing $S_X^{-1}$ is suboptimal for subsequent inference unless some form of regularization is employed. Our consideration of low-rank linear transformations $AX$ provides one principled approach to regularizations of high-dimensional $S_X$. In contrasts, the above (unregularized) formulation of low-rank CCA frequently yields discrimination direction vectors corresponding to ``maximum data piling'' (MDP) directions \cite{Shin11,maximum1} in high-dimensional settings (and always yield maximum data piling directions when $p \ge n$). These MDP directions lead to {\em perfect} discrimination of the training data, but can suffer from poor generalization performance,
as the examples in \cite{Shin11,maximum1} indicate.

Na\"ively computing the low-rank \Cca~projection requires storing and inverting a $p \times p$ matrix.  However, we devised an implementation for low-rank \Cca~that does not require ever materializing this matrix. Modern eigensolvers computes eigenvalues by performing a sequence of matrix vector multiplication. For example, to compute eigenvalues of $S_{X}$, an eigensolver performs $S_{X} v$ multiple times until the algorithm converges. Assume that the number of iteration is $i$, the computation complexity of the eigensolver is $O(n \times p \times i)$. Performing pseudo-inverse of $S_{X}$ computes truncated SVD on $S_{X}$, resulting in $S_{X} v = \sum_{i} (X_i - \bar{X}) ((X_i - \bar{X})^{\top} v)$.
Here we never physically generate $S_X$. Instead, we always compute $v' = (X_i - \bar{X})^{\top} v$ and then $v'' = (X_i - \bar{X}) v'$ to compute $S_{X} v$.
Assume $k$ classes, $S_{X} v$ has the computation complexity of $O(n \times p \times k)$ and the space complexity of $O(n \times p \times k)$. $S_{X}$ can be decomposed into
$U \Sigma V$, where $U$ is a $n \times n$ matrix and $V$ is a $n \times p$ matrix.
$$S_{X}^{\dagger} S_{B} = U \Sigma^{-1} V (\frac{n_0}{n} (\bar{X}_0 - \bar{X}) (\bar{X}_0 - \bar{X})^{\top} + \frac{n_1}{n} (\bar{X}_1 - \bar{X}) (\bar{X}_1 - \bar{X})^{\top}).$$
Computing eigenvalues of $S_{X}^{\dagger} S_{B}$ requires
$$S_{X}^{\dagger} S_{B} v = U \Sigma^{-1} V (\frac{n_0}{n} (\bar{X}_0 - \bar{X}) ((\bar{X}_0 - \bar{X})^{\top} v) + \frac{n_1}{n} (\bar{X}_1 - \bar{X}) ((\bar{X}_1 - \bar{X})^{\top} v)).$$
Similar to $S_{X} v$, we never physically generate $S_{X}^{\dagger}$ or $S_{B}$. Instead, we always multiply the terms on the right with $v$ first, which results in
the computation complexity of $O(n \times p)$ and the space complexity of $O(n \times p)$. To our knowledge, this algorithm is novel, and the implementation is also of course novel.

\section{Simulations}
\label{sec:simulations}


Let $f_{x|y}$ denote the conditional distribution of $X$ given $Y$, and let $f_y$ denote the prior probability of $Y$.  For simplicity, assume that realizations of the random variable $X$ are $p$-dimensional vectors,  $x \in \Real^p$, and realizations of the random variable $Y$ are binary, $y \in \{0,1\}$.
For most simulation settings, each class is Gaussian:
$f_{x|y} = \mc{N}(\bmu_y,\mb{\Sigma_y})$, where $\bmu_y$ is the class-conditional mean and $\mb{\Sigma}_y$ is the class-conditional covariance.  Moreover, we assume $f_y$ is a Bernoulli distribution with probability $\pi$ that $y=1$, $f_y = \mc{B}(\pi)$.
We typically assume that both classes are equally likely, $\pi=0.5$, and the covariance matrices are the same, $\bSig_0=\bSig_1=\bSig$. Under such assumptions, we merely specify $\bth=\{\bmu_0,\bmu_1, \bSig\}$. We consider the following simulation settings:

\paragraph*{Stacked Cigars}
\begin{compactitem}
\item $\bmu_0=\mb{0}$,
\item $\bmu_1=(a, b, a, \ldots, a)$,
\item $\bSig$ is a diagonal matrix, with diagonal vector, $\bd=(1,b,1\ldots,1)$,
\end{compactitem}
where $a=0.15$ and $b=4$.

\paragraph*{Trunk}
\begin{compactitem}
\item $\bmu_0=b/\sqrt{(1, 3, 5, \ldots, 2p)}$,
\item $\bmu_1=-\bmu_0$,
\item $\bSig$ is a diagonal matrix, with diagonal vector, $\bd=100/\sqrt{(p, p-1, p-2, \ldots, 1)}$,
\end{compactitem}
where $b=4$.



\paragraph*{Rotated Trunk} Same as Trunk, but the data are randomly rotated, that is, we sample $\mb{Q}$ uniformly from the set of p-dimensional rotation matrices, and then set:
\begin{compactitem}
\item $\bmu_0 \leftarrow \mb{Q} \bmu_0$,
\item $\bmu_1 \leftarrow \mb{Q} \bmu_1$,
\item $\bSig \leftarrow \mb{Q} \bSig \mb{Q}\TT$.
\end{compactitem}

%
%
%
%

\paragraph*{3 Classes} Same as Trunk, but with a third mean equal to the zero vector, $\bmu_2=\mb{0}$.
\begin{compactitem}
\item $\bmu_0=b/\sqrt{(1, 3, 5, \ldots, 2p)}$,
\item $\bmu_1=-\bmu_0$,
\item $\bmu_2 = \mb{0}$,
\item $\bSig$ is a diagonal matrix, with diagonal vector, $\bd=100/\sqrt{(p, p-1, p-2, \ldots, 1)}$,
\end{compactitem}
where $b=4$.

\paragraph*{Robust}
An experiment in which outliers are present for estimation of the projection matrix, but removed for training and testing of the classifier. This is due to the strong amount of noise present in the robust experiment will lead to poor generalizability of the estimated \Lda~classifier. Parameters indexed by $i$ correspond to the generative model for the inliers, and those with $o$ correspond to the outliers.
\begin{compactitem}
\item $\bmu_0^{(i)}=b/\sqrt{(1, 3, 5, \hdots, p)}$ for the first $p/2$ dimensions and 0 otherwise,
\item $\bmu_1^{(i)}=-\bmu_0$,
\item $\bSig^{(i)}=b^3/\sqrt{(1, 2, \hdots, p)}$,
\item $\bmu^{(o)}=0$,
\item $\bSig^{(o)}=b^6/\sqrt{(1, 2, \hdots, p)}$,
\item $\pi^{(i)} = 0.7$,
\item $\pi^{(o)} = 0.3$,
\end{compactitem}
and outliers are randomly assigned class $0$ or class $1$ with equal probability.

\paragraph*{Cross}
An experiment in which the two classes have identical means but different covariance matrices, meaning the optimal discriminant boundary is quadratic.
\begin{compactitem}
\item $\bmu_0 = \bmu_1 = \mb{0}$,
\item $\Sigma_0$ is a diagonal matrix, with diagonal $\left(a, \hdots, a, b, \hdots, b\right)$ where the first $\frac{d}{3}$ elements are $a$, and the rest are $b$,
\item $\Sigma_1$ is a diagonal matrix, with diagonal $\left(b, \hdots, b, a, \hdots, a, b, \hdots, b\right)$ where the middle $\frac{d}{3}$ elements are $a$, and the others are $b$,
\end{compactitem}
and we let $a=1$, and $b=\frac{1}{4}$.

\paragraph{Hump-$K$} An experiment with $K$ classes, in which the class means display an alternating series of humps, and the class covariance is a scalar multiple of the identity.
\begin{compactitem}
\item $\pi_k = \frac{1}{K}$
\item $x_{l, k} = \lfloor-\frac{K}{2}\rfloor$ the left endpoint of the hump
\item $x_{r, k} = d - x_{l, K-k+1}$ the right endpoint of the hump
\item $x_{m, k} = \left[\frac{x_{l, k} + x_{r, k}}{2}\right]$ the midpoint of the hump
\item Let $a_k, b_k, c_k$ be the unique coefficients such that $c_k + b_kx + a_kx^2$ passes through $x_{l,k}$ at $y=0$, passes through $x_{r,k}$ at $y=b$, and passes through $x_{m,k}$ at $y=0$.
\item Let $\alpha_k = \begin{cases}1 &  k\textrm{ is odd}\\ -1 & k\textrm{ is even}\end{cases}$
\item for $j = 1, \hdots, d$, let $\mu_{k, j} = \begin{cases}
0 & j \not\in [x_{l, k}, x_{r,k}] \\
c_k + b_k j + a_k j^2 & j \in [x_{l,k}, x_{r,k}]
\end{cases}$
\item $\pmb \Sigma$ is a diagonal matrix, with diagonal vector $(\sigma, \hdots, \sigma)$.
\end{compactitem} 
where $b=4$ and $\sigma=\frac{100}{K}$.
\paragraph*{Computational Efficiency Experiments} These experiments used the Trunk setting, increasing the observed dimensionality.

\paragraph*{Hypothesis Testing Experiments} We considered two related joint distributions here.  The first joint (Diagonal) is described by:
\begin{compactitem}
\item $\bmu_0=\mb{0}$,
\item $\mt{\bmu}_1 \sim \mc{N}(\mb{0}, \mb{I})$, $\bmu_1 = \mt{\bmu}_1 / \norm{\mt{\bmu}_1}$,
\item $\bSig$ is the same Toeplitz matrix (where the top row is $\rho^{(0,1,2,\ldots,p-1)}$), and the matrix is rescaled to have a Frobenius norm of $50$.
\end{compactitem}
The second (Dense) is the same except that the eigenvectors are uniformly random sampled orthonormal matrices, rather than the identity matrix.

\paragraph*{Regression Experiments} In this experiment we used a distribution similar to the Toeplitz distribution as described above, but $y$ was a linear function of $x$, that is, $y=Ax$, where $x \sim \mc{N}(\mb{0},\Sigma)$, where $\Sigma$ is the above described Toeplitz matrix, and $A$ is a diagonal matrix whose first two diagonal elements are non-zero, and the rest are zero.

%
%






\section{Theorems and Proofs of Main Result}
\label{sec:main}

\subsection{Chernoff information}
We now introduce the notion of the Chernoff information, which serves as our surrogate measure for the Bayes error of any classification procedure given the {\em projected} data. Our discussion of the Chernoff information is under the context of decision rules for hypothesis testing, nevertheless, as evidenced by the fact that the \emph{maximum a posteriori} decision rule---equivalently the Bayes classifier---achieves the Chernoff information rate, this distinction between hypothesis testing and classification is mainly for ease of exposition.

Let $F_0$ and $F_1$ be two absolutely continuous multivariate distributions in $\Omega \subset \mathbb{R}^{d}$ with density functions $f_0$ and $f_1$, respectively. Suppose that $X_1, X_2, \dots, X_m$ are independent and identically distributed random variables, with $X_i$ distributed either $F_0$ or $F_1$. We are interested in testing the simple null hypothesis $\mathbb{H}_0 \colon F = F_0$ against the simple alternative hypothesis $\mathbb{H}_1 \colon F = F_1$. A test $T$ is a sequence of mapping $T_m \colon \Omega^{m} \mapsto \{0,1\}$
such that given $X_1 = x_1, X_2 =x_2, \dots, X_m = x_m$, the test rejects $\mathbb{H}_0$ in favor of $\mathbb{H}_1$ if $T_m(x_1, x_2, \dots, x_m) = 1$; similarly, the test decides $\mathbb{H}_1$ instead of  $\mathbb{H}_0$ if $T_m(x_1, x_2, \dots, x_m) = 0$.
The Neyman-Pearson lemma states that, given $X_1 = x_1, X_2 = x_2, \dots, X_m = x_m$ and a threshold $\eta_m \in \mathbb{R}$, the likelihood ratio test rejects $\mathbb{H}_0$ in favor of $\mathbb{H}_1$ whenever
$$ \Bigl(\sum_{i=1}^{m} \log{f_0(x_i)} - \sum_{i=1}^{m} \log{f_1(x_i)} \Bigr) \leq \eta_m. $$
Moreover, the likelihood ratio test is the most powerful test at significance level $\alpha_m = \alpha(\eta_m)$, i.e., the likelihood ratio test minimizes the type II error $\beta_m$ subject to the constraint that the type I error is at most $\alpha_m$.

Assume that $\pi \in (0,1)$ is a prior probability of $\mathbb{H}_0$ being true. Then, for a given $\alpha_m^{*} \in (0,1)$, let $\beta_m^{*} = \beta_m^{*}(\alpha_m^{*})$ be the type II error associated with the likelihood ratio test when the type I error is at most $\alpha_m^{*}$. The quantity $\inf_{\alpha_m^{*} \in (0,1)} \pi \alpha_m^{*} + (1 - \pi) \beta_m^{*}$
is then the Bayes risk in deciding between $\mathbb{H}_0$ and $\mathbb{H}_1$ given the $m$ independent random variables $X_1, X_2, \dots, X_m$. A classical result of Chernoff \cite{chernoff_1952} states that the Bayes risk is intrinsically linked to a quantity known as the {\em Chernoff information}. More specifically, let $C(F_0, F_1)$ be the quantity
\begin{equation}
\label{eq:chernoff-defn}
\begin{split} C(F_0, F_1) & = - \log \, \Bigl[\, \inf_{t \in (0,1)} \int_{\mathbb{R}^{d}} f_0^{t}(\mb{x}) f_1^{1-t}(\mb{x}) \mathrm{d}\mb{x} \Bigr] \\
&= \sup_{t \in (0,1)} \Bigl[ - \log \int_{\mathbb{R}^{d}} f_0^{t}(\mb{x}) f_1^{1-t}(\mb{x}) \mathrm{d}\mb{x} \Bigr]
\end{split}
\end{equation}
Then we have
\begin{equation}
\label{eq:chernoff-binary}
\begin{split}
\lim_{m \rightarrow \infty} \frac{1}{m} \inf_{\alpha_m^{*} \in (0,1)} \log( \pi \alpha_m^{*} + (1 - \pi) \beta_m^{*}) & = - \, C(F_0, F_1).
\end{split}
\end{equation}
Thus $C(F_0, F_1)$ is the {\em exponential} rate at which the Bayes error $\inf_{\alpha_m^{*} \in (0,1)} \pi \alpha_m^{*} + (1 - \pi) \beta_m^{*}$ decreases as $m \rightarrow \infty$; we also note that the $C(F_0, F_1)$ is independent of $\pi$. We also define, for a given $t \in (0,1)$ the Chernoff divergence $C_t(F_0, F_1) $ between $F_0$ and $F_1$ by
$$ C_{t}(F_0,F_1) = - \log \int_{\mathbb{R}^{d}} f_0^{t}(\mb{x}) f_1^{1-t}(\mb{x}) \mathrm{d}\mb{x}. $$
The Chernoff divergence is an example of a $f$-divergence as defined in \cite{Csizar}. When $t = 1/2$, $C_t(F_0,F_1)$ is the Bhattacharyya distance between $F_0$ and $F_1$.

The result of Eq.~\eqref{eq:chernoff-binary} can be extended to $K + 1 \geq 2$ hypothesis, with the exponential rate being the minimum of the Chernoff information between any pair of hypothesis. More specifically, let $F_0, F_1, \dots, F_{K}$ be distributions on $\mathbb{R}^{d}$ and let $X_1, X_2, \dots, X_m$ be independent and identically distributed random variables with distribution $F \in \{F_0, F_1, \dots, F_K\}$. Our inference task is in determining the distribution of the $X_i$ among the $K+1$ hypothesis $\mathbb{H}_0 \colon F = F_0, \dots, \mathbb{H}_{K} \colon F = F_K$.
Suppose also that hypothesis $\mathbb{H}_k$ has {\em a priori} probabibility $\pi_k$. For any decision rule $g$, the risk of $g$ is $r(g) = \sum_{k} \pi_k \sum_{l \not = k} \alpha_{lk}(g) $ where $\alpha_{lk}(g)$ is the probability of accepting hypothesis $\mathbb{H}_l$ when hypothesis $\mathbb{H}_k$ is true. Then we have \cite{leang-johnson}
\begin{equation}
\label{eq:chernoff-multiple}
\inf_{g} \lim_{m \rightarrow \infty}  \frac{r(g)}{m} = - \min_{k \not = l} C(F_k, F_l),
\end{equation}
where the infimum is over all decision rules $g$, i.e., for any $g$, $r(g)$ decreases to $0$ as $m \rightarrow \infty$ at a rate no faster than $\exp(- m \min_{k \not = l} C(F_k, F_l))$.

When the distributions $F_0$ and $F_1$ are multivariate normal, that is, $F_0 =  \mathcal{N}(\mu_0, \Sigma_0)$ and $F_1 = \mathcal{N}(\mu_1, \Sigma_1)$; then, denoting by $\Sigma_t = t \Sigma_0 + (1 - t) \Sigma_1$, we have
\begin{equation*}
C(F_0, F_1) = \sup_{t \in (0,1)} \Bigl(\frac{t(1 - t)}{2} (\mu_1 - \mu_2)^{\top}\Sigma_t^{-1}(\mu_1 - \mu_2) + \frac{1}{2} \log \frac{|\Sigma_t|}{|\Sigma_0|^{t} |\Sigma_1|^{1 - t}}  \Bigr).
\end{equation*}

\subsection{Projecting data and Chernoff information}
We now discuss how the Chernoff information characterizes the effect a linear transformation $A$ of the data has on classification accuracy.
We start with the following simple result whose proof follows directly from Eq.~\eqref{eq:chernoff-multiple}.
\begin{lem}
\label{lem:chernoff-1}
Let $F_0 = \mathcal{N}(\mu_0, \Sigma)$ and $F_1 \sim \mathcal{N}(\mu_1, \Sigma)$ be two multivariate normals with equal covariance matrices. For any linear transformation $A$, let $F_0^{(A)}$ and $F_1^{(A)}$ denote the distribution of $AX$ when $X \sim F_0$ and $X \sim F_1$, respectively. We then have
\begin{equation}
\label{eq:lem:chernoff-1}
\begin{split}
C(F_0^{(A)}, F_1^{(A)}) &= \frac{1}{8} (\mu_1 - \mu_0)^{\top} A^{\top} (A \Sigma A^{\top})^{-1} A (\mu_1 - \mu_0) \\ & = \frac{1}{8} (\mu_1 - \mu_0)^{\top} \Sigma^{-1/2} \Sigma^{1/2} A^{\top} (A \Sigma A^{\top})^{-1} A \Sigma^{1/2} \Sigma^{-1/2} (\mu_1 - \mu_0) \\
&= \frac{1}{8} \|P_{\Sigma^{1/2} A^{\top}} \Sigma^{-1/2} (\mu_1 - \mu_0) \|_{F}^{2}
\end{split}
\end{equation}
where $P_{Z} = Z(Z^{\top} Z)^{-1} Z^{\top}$ denotes the matrix corresponding to the orthogonal projection onto the columns of $Z$.
\end{lem}

Thus for a classification problem where $X | Y = 0$ and $X | Y = 1$ are distributed multivariate normals with mean $\mu_0$ and $\mu_1$ and the same covariance matrix $\Sigma$, Lemma~\ref{lem:chernoff-1} then states that for any two linear transformations $A$ and $B$, the transformed data $AX$ is to be preferred over the transformed data $BX$ if
$$ (\mu_1 - \mu_0)^{\top} A^{\top} (A \Sigma A^{\top})^{-1} A (\mu_1 - \mu_0) > (\mu_1 - \mu_0)^{\top} B^{\top} (B \Sigma B^{\top})^{-1} B (\mu_1 - \mu_0). $$
In particular, using Lemma~\ref{lem:chernoff-1}, we obtain the following result showing the dominance of \Lol~over reduced-rank \Lda~(or simply {\color{black}\Rrlda}~for brevity) when the class conditional distributions are multivariate normal with a common variance.

\begin{thm}
\label{thm:Chernoff1}
Let $F_0 = N(\mu_0, \Sigma)$ and $F_1 \sim N(\mu_1, \Sigma)$ be multivariate normal distributions in $\mathbb{R}^{p}$. Let $\lambda_1 \geq \lambda_2 \geq \dots \lambda_p$ be the eigenvalues of $\Sigma$ and $u_1, u_2, \dots, u_p$ the corresponding eigenvectors. For $d \leq p$, let $U_{d} = [u_1 \mid u_2 \mid \dots \mid u_{d}] \in \mathbb{R}^{p \times d}$
be the matrix whose columns are the eigenvectors $u_1, u_2, \dots, u_{d}$. Let $A = [\delta \mid U_{d-1}]$ and $B = U_{d}$ be the \Lol~and {\color{black}\Rrlda}~linear transformations into $\mathbb{R}^{d}$, respectively.
Then
\begin{equation}
\label{eq:chernoff_arbitrary1}
\begin{split}
C(F_0^{(A)}, F_1^{(A)}) - C(F_0^{(B)}, F_1^{(B)}) &= \frac{(\delta^{\top} (I - U_{d-1} U_{d-1}^{\top}) \delta)^{2}}{\delta^{\top} (\Sigma - \Sigma_{d-1}) \delta} - \delta^{\top} (\Sigma_{d}^{\dagger} - \Sigma_{d-1}^{\dagger}) \delta \\ &
\geq \frac{1}{\lambda_d} \delta^{\top} (I - U_{d-1}U_{d-1}^{\top}) \delta - \frac{1}{\lambda_d} \delta^{\top} (U_{d} U_{d}^{\top} - U_{d-1} U_{d-1}^{\top}) \delta \geq 0
\end{split}
\end{equation}
and the inequality is strict whenever $\delta^{\top} (I - U_d U_d^{\top}) \delta > 0$.
\end{thm}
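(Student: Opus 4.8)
The plan is to reduce everything to Lemma~\ref{lem:chernoff-1}. After accounting for the transpose convention (the $p\times d$ matrices $A=[\delta\mid U_{d-1}]$ and $B=U_d$ of the theorem play the role of the lemma's $A^\top$), that lemma gives $C(F_0^{(A)},F_1^{(A)})=\tfrac18\|P_{\Sigma^{1/2}A}\,\Sigma^{-1/2}\delta\|_F^2$ and likewise for $B$, where $P_Z$ is the orthogonal projector onto the column space of $Z$ and $\delta=\mu_1-\mu_0$. So the whole theorem becomes: compute two projection norms of the single fixed vector $w:=\Sigma^{-1/2}\delta$, subtract, and bound. The universal constant $\tfrac18$ does not affect any of the inequalities, so I suppress it to match the statement. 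The structural fact I will use throughout is that $\Sigma^{1/2}$, $\Sigma$, $U_{d-1}U_{d-1}^\top$, and $U_dU_d^\top$ all share the eigenbasis $u_1,\dots,u_p$ and hence commute; this is what collapses every quadratic form to a sum over eigen-coordinates.

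First I would handle $B=U_d$: the columns of $\Sigma^{1/2}U_d$ are $\sqrt{\lambda_i}\,u_i$ for $i\le d$, so $P_{\Sigma^{1/2}U_d}=U_dU_d^\top$, and expanding $w$ in the eigenbasis gives $C(F_0^{(B)},F_1^{(B)})=\delta^\top\Sigma_d^\dagger\delta=\sum_{i\le d}\lambda_i^{-1}(u_i^\top\delta)^2$. Next, for $A=[\delta\mid U_{d-1}]$, the column space of $\Sigma^{1/2}A$ is $\mathrm{span}(u_1,\dots,u_{d-1})$ together with the single extra direction $v:=(I-U_{d-1}U_{d-1}^\top)\Sigma^{1/2}\delta$, which by commutation equals $\Sigma^{1/2}(I-U_{d-1}U_{d-1}^\top)\delta$ and is orthogonal to that span. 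When $v\neq0$ the projector splits as $P_{\Sigma^{1/2}A}=U_{d-1}U_{d-1}^\top+vv^\top/\|v\|^2$, so $\|P_{\Sigma^{1/2}A}w\|^2=\delta^\top\Sigma_{d-1}^\dagger\delta+(v^\top w)^2/\|v\|^2$; commutation and idempotency of $I-U_{d-1}U_{d-1}^\top$ give $v^\top w=\delta^\top(I-U_{d-1}U_{d-1}^\top)\delta$ and $\|v\|^2=\delta^\top(\Sigma-\Sigma_{d-1})\delta$. Subtracting the $B$-term then yields the claimed equality exactly.

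For the two inequalities I would note that $\Sigma-\Sigma_{d-1}=\sum_{i\ge d}\lambda_i u_iu_i^\top$ with $\lambda_i\le\lambda_d$, so $\delta^\top(\Sigma-\Sigma_{d-1})\delta\le\lambda_d\,\delta^\top(I-U_{d-1}U_{d-1}^\top)\delta$, which lower-bounds the first term of the equality by $\tfrac1{\lambda_d}\delta^\top(I-U_{d-1}U_{d-1}^\top)\delta$; and that $\Sigma_d^\dagger-\Sigma_{d-1}^\dagger=\lambda_d^{-1}u_du_d^\top=\lambda_d^{-1}(U_dU_d^\top-U_{d-1}U_{d-1}^\top)$. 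Telescoping the two projectors, the difference is bounded below by $\tfrac1{\lambda_d}\bigl(\delta^\top(I-U_{d-1}U_{d-1}^\top)\delta-\delta^\top(U_dU_d^\top-U_{d-1}U_{d-1}^\top)\delta\bigr)=\tfrac1{\lambda_d}\delta^\top(I-U_dU_d^\top)\delta\ge0$, since $I-U_dU_d^\top$ is an orthogonal projector. As $I-U_dU_d^\top$ is positive semidefinite with kernel $\mathrm{span}(u_1,\dots,u_d)$, this final quantity is strictly positive exactly when $\delta^\top(I-U_dU_d^\top)\delta>0$, which is the strictness claim.

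The step I expect to require the most care is the degenerate case $v=0$, i.e.\ $\delta\in\mathrm{span}(u_1,\dots,u_{d-1})$: then $A\Sigma A^\top$ is singular and the fraction $(\delta^\top(I-U_{d-1}U_{d-1}^\top)\delta)^2/\delta^\top(\Sigma-\Sigma_{d-1})\delta$ is $0/0$, so the rank-one projector update used above does not apply. I would dispatch this separately: in that case $\Sigma^{1/2}A$ and $U_{d-1}$ have the same column space, and $u_d^\top\delta=0$, so $C(F_0^{(A)},F_1^{(A)})=\delta^\top\Sigma_{d-1}^\dagger\delta=\delta^\top\Sigma_d^\dagger\delta=C(F_0^{(B)},F_1^{(B)})$ while $\delta^\top(I-U_dU_d^\top)\delta=0$; hence every term in the asserted chain vanishes and the statement holds trivially. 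A minor related point is simply keeping the $d\times p$ versus $p\times d$ transpose conventions aligned between the lemma and the theorem, which I handle at the outset.
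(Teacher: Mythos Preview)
Your proof is correct and takes a genuinely different route from the paper. The paper computes $\delta^{\top}A^{\top}(A\Sigma A^{\top})^{-1}A\delta$ by writing $A\Sigma A^{\top}$ as a $2\times 2$ block matrix, inverting it with the Schur-complement formula, and simplifying the off-diagonal block via Sherman--Morrison--Woodbury; only after this algebra does the quantity $\gamma=\delta^{\top}(\Sigma-\Sigma_{d-1})\delta$ emerge. You instead invoke the projection form of Lemma~\ref{lem:chernoff-1} directly and observe that the column space of $\Sigma^{1/2}A$ is the orthogonal sum of $\mathrm{span}(u_1,\dots,u_{d-1})$ and the single residual direction $v=(I-U_{d-1}U_{d-1}^{\top})\Sigma^{1/2}\delta$, so the projector splits as $U_{d-1}U_{d-1}^{\top}+vv^{\top}/\|v\|^{2}$ and the two pieces of the formula fall out immediately. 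This geometric decomposition is shorter, avoids block inversion entirely, and makes the origin of the ratio $(\delta^{\top}(I-U_{d-1}U_{d-1}^{\top})\delta)^{2}/\delta^{\top}(\Sigma-\Sigma_{d-1})\delta$ transparent as $(v^{\top}w)^{2}/\|v\|^{2}$. Your explicit treatment of the degenerate case $v=0$ (equivalently $\gamma=0$) is also an improvement: the paper's proof silently inverts $A\Sigma A^{\top}$ and divides by $\gamma$, which is not valid when $\delta\in\mathrm{span}(u_1,\dots,u_{d-1})$, whereas you show all terms vanish there. The inequality portion is handled identically in both proofs.
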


\begin{proof}
We first note that
\begin{equation*}
A \Sigma A^{\top} = \bigl[ \delta | U_{d-1} \bigr]^{\top}\, \Sigma \, \bigl[ \delta | U_{d-1} \bigr] = \begin{bmatrix} \delta^{\top} \Sigma \delta & \delta^{\top} \Sigma U_{d-1} \\ U_{d-1}^{\top} \Sigma \delta & U_{d-1}^{\top} \Sigma U_{d-1} \end{bmatrix} = \begin{bmatrix} \delta^{\top} \Sigma \delta & \delta^{\top} \Sigma U_{d-1} \\ U_{d-1}^{\top} \Sigma \delta & \Lambda_{d-1} \end{bmatrix}
\end{equation*}
where $\Lambda_{d-1} = \mathrm{diag}(\lambda_1, \lambda_2, \dots, \lambda_{d-1})$ is the $(d-1) \times (d-1)$ diagonal matrix formed by the eigenvalues $\lambda_1, \lambda_2, \dots, \lambda_{d-1}$.
Therefore, letting $\gamma = \delta^{\top} \Sigma \delta - \delta^{\top} \Sigma U_{d-1} \Lambda_{d-1}^{-1} U_{d-1}^{\top} \Sigma \delta$, we have
\begin{equation*}
\begin{split}
(A \Sigma A^{\top})^{-1} &= \begin{bmatrix} \delta^{\top} \Sigma \delta & \delta^{\top} \Sigma U_{d-1} \\ U_{d-1}^{\top} \Sigma \delta & U_{d-1}^{\top} \Sigma U_{d-1} \end{bmatrix}^{-1} \\
&= \begin{bmatrix} \gamma^{-1} & - \delta^{\top} \Sigma U_{d-1} \Lambda_{d-1}^{-1} \gamma^{-1} \\
- \Lambda_{d-1}^{-1} U_{d-1}^{\top} \Sigma \delta \gamma^{-1} & \bigl(\Lambda_{d-1} - \frac{U_{d-1}^{\top} \Sigma \delta \delta^{\top} \Sigma U_{d-1}}{\delta^{\top} \Sigma \delta} \bigr)^{-1} \end{bmatrix}.
\end{split}
\end{equation*}
The Sherman-Morrison-Woodbury formula then implies
\begin{equation*}
\begin{split}
\Bigl(\Lambda_{d-1} - \frac{U_{d-1}^{\top} \Sigma \delta \delta^{\top} \Sigma U_{d-1}}{\delta^{\top} \Sigma \delta} \Bigr)^{-1} & = \Lambda_{d-1}^{-1} + \frac{\Lambda_{d-1}^{-1} U_{d-1}^{\top} \Sigma \delta \delta^{\top} \Sigma U_{d-1} \Lambda_{d-1}^{-1} /(\delta^{\top} \Sigma \delta)}{1 - \delta^{\top} \Sigma U_{d-1} \Lambda_{d-1}^{-1} U_{d-1}^{\top} \Sigma \delta/(\delta^{\top} \Sigma \delta)} \\
&= \Lambda_{d-1}^{-1} + \frac{\Lambda_{d-1}^{-1} U_{d-1}^{\top} \Sigma \delta \delta^{\top} \Sigma U_{d-1} \Lambda_{d-1}^{-1}}{\delta^{\top} \Sigma \delta - \delta^{\top} \Sigma U_{d-1} \Lambda_{d-1}^{-1} U_{d-1}^{\top} \Sigma \delta} \\
&= \Lambda_{d-1}^{-1} + \gamma^{-1} \Lambda_{d-1}^{-1} U_{d-1}^{\top} \Sigma \delta \delta^{\top} \Sigma U_{d-1} \Lambda_{d-1}^{-1}
\end{split}
\end{equation*}

We note that $\Sigma U_{d-1} = U_{d-1} \Lambda_{d-1}$ and $U_{d-1}^{\top} \Sigma = \Lambda_{d-1} U_{d-1}^{\top}$ and hence
\begin{equation*}
\begin{split}
 \gamma = \delta^{\top} \Sigma \delta - \delta^{\top} \Sigma U_{d-1} \Lambda_{d-1}^{-1} U_{d-1}^{\top} \Sigma \delta &= \delta^{\top} \Sigma \delta - \delta^{\top} U_{d-1} \Lambda_{d-1} \Lambda_{d-1}^{-1} \Lambda_{d-1} U_{d-1}^{\top} \delta \\ & =  \delta^{\top} \Sigma \delta - \delta^{\top} U_{d-1} \Lambda_{d-1} U_{d-1}^{\top} \delta = \delta^{\top} (\Sigma - \Sigma_{d-1}) \delta
 \end{split}
 \end{equation*}
 where $\Sigma_{d-1} = U_{d-1} \Lambda_{d-1} U_{d-1}^{\top}$ is the best rank $d-1$ approximation to $\Sigma$ with respect to any unitarily invariant norm. In addition,
 $$ \Lambda_{d-1}^{-1} U_{d-1}^{\top} \Sigma \delta \delta^{\top} \Sigma U_{d-1} \Lambda_{d-1}^{-1} = \Lambda_{d-1}^{-1} \Lambda_{d-1} U_{d-1}^{\top} \delta \delta^{\top} U_{d-1} \Lambda_{d-1} \Lambda_{d-1}^{-1} = U_{d-1}^{\top} \delta \delta^{\top} U_{d-1}.$$
 We thus have
 \begin{equation*}
 (A \Sigma A^{\top})^{-1} = \begin{bmatrix} \gamma^{-1} & - \delta^{\top} \Sigma U_{d-1} \Lambda_{d-1}^{-1} \gamma^{-1} \\
- \Lambda_{d-1}^{-1} U_{d-1}^{\top} \Sigma \delta \gamma^{-1} & \bigl(\Lambda_{d-1} - \frac{U_{d-1}^{\top} \Sigma \delta \delta^{\top} \Sigma U_{d-1}}{\delta^{\top} \Sigma \delta} \bigr)^{-1} \end{bmatrix} = \begin{bmatrix} \gamma^{-1} & - \gamma^{-1} \delta^{\top} U_{d-1} \\ - \gamma^{-1} U_{d-1}^{\top} \delta & \Lambda_{d-1}^{-1} + \gamma^{-1}  U_{d-1}^{\top} \delta \delta^{\top} U_{d-1} \end{bmatrix}.
 \end{equation*}
 Therefore,
 \begin{equation*}
 \begin{split}
 \delta^{\top} A^{\top} (A \Sigma A^{\top})^{-1} A \delta & = \delta^{\top} \bigl[ \delta \mid U_{d-1} \bigr] \begin{bmatrix} \gamma^{-1} & - \gamma^{-1} \delta^{\top} U_{d-1} \\ - \gamma^{-1} U_{d-1}^{\top} \delta & \Lambda_{d-1}^{-1} + \gamma^{-1}  U_{d-1}^{\top} \delta \delta^{\top} U_{d-1} \end{bmatrix} \bigl[ \delta | U_{d-1} \bigr]^{\top} \delta \\
 &= \bigl[\delta^{\top} \delta \mid \delta^{\top} U_{d-1} \bigr] \begin{bmatrix} \gamma^{-1} & - \gamma^{-1} \delta^{\top} U_{d-1} \\ - \gamma^{-1} U_{d-1}^{\top} \delta & \Lambda_{d-1}^{-1} + \gamma^{-1}  U_{d-1}^{\top} \delta \delta^{\top} U_{d-1} \end{bmatrix} \begin{bmatrix} \delta^{\top} \delta \\ U_{d-1}^{\top} \delta \end{bmatrix} \\
 &= \gamma^{-1} (\delta^{\top} \delta)^{2} - 2 \gamma^{-1} \delta^{\top} \delta \delta^{\top} U_{d-1} U_{d-1}^{\top} \delta + \delta^{\top} U_{d-1} (\Lambda_{d-1}^{-1} + \gamma^{-1} U_{d-1}^{\top} \delta \delta^{\top} U_{d-1}) U_{d-1}^{\top} \delta \\
 &= \gamma^{-1} (\delta^{\top} \delta - \delta^{\top} U_{d-1} U_{d-1}^{\top} \delta)^{2} + \delta^{\top} U_{d-1} \Lambda_{d-1}^{-1} U_{d-1}^{\top} \delta \\
 &= \gamma^{-1} (\delta^{\top} (I - U_{d-1} U_{d-1}^{\top}) \delta)^{2} + \delta^{\top} \Sigma_{d-1}^{\dagger} \delta
 \end{split}
 \end{equation*}
 where $\Sigma_{d-1}^{\dagger}$ is the Moore-Penrose pseudo-inverse of $\Sigma_{d-1}$. The \Lda~projection matrix into $\mathbb{R}^{d}$ is given by $B = U_{d}^{\top}$ and hence
 \begin{equation}
 \delta^{\top} B^{\top} (B \Sigma B^{\top})^{-1} B \delta = \delta^{\top} U_{d} \Lambda_d^{-1} U_d^{\top} \delta = \delta^{\top} \Sigma_{d}^{\dagger} \delta.
 \end{equation}
 We thus have
\begin{equation*}
\begin{split}
C(F_0^{(A)}, F_1^{(A)}) - C(F_0^{(B)}, F_1^{(B)}) &= \gamma^{-1} (\delta^{\top} (I - U_{d-1} U_{d-1}^{\top}) \delta)^{2} - \delta^{\top} (\Sigma_{d}^{\dagger} - \Sigma_{d-1}^{\dagger}) \delta \\ &= \frac{(\delta^{\top} (I - U_{d-1} U_{d-1}^{\top}) \delta)^{2}}{\delta^{\top} (\Sigma - \Sigma_{d-1}) \delta} - \delta^{\top} (\Sigma_{d}^{\dagger} - \Sigma_{d-1}^{\dagger}) \delta \\ &
\geq \frac{(\delta^{\top} (I - U_{d-1} U_{d-1}^{\top}) \delta)^{2}}{\lambda_d \delta^{\top} (I - U_{d-1}U_{d-1}^{\top}) \delta} - \frac{1}{\lambda_{d}} \delta^{\top} u_{d} u_{d}^{\top} \delta
\\ &= \frac{1}{\lambda_d} \delta^{\top} (I - U_{d-1}U_{d-1}) \delta - \frac{1}{\lambda_d} \delta^{\top} (U_{d} U_{d}^{\top} - U_{d-1} U_{d-1}^{\top}) \delta \geq 0
\end{split}
\end{equation*}
where we recall that $u_{d}$ is the $d$-th column of $U_{d}$. Thus $C(F_0^{(A)}, F_1^{(A)}) \geq C(F_0^{(B)}, F_1^{(B)})$ always, and the inequality is strict whenever $\delta^{\top} (I - U_{d} U_{d}^{\top}) \delta > 0$.
\end{proof}
\begin{remark}
Theorem~\ref{thm:Chernoff1} can be extended to the case wherein the linear transformations are $A = [\delta \mid U_{d-1}]$ and $B = U_d$, respectively, such that
$U_{d}$ is an arbitrary $p \times d$ matrix with $U_{d}^{\top} U_d = I$, and $U_{d-1}$ is the first $d-1$ columns of $U_d$.
A similar derivation to that in the proof of Theorem~\ref{thm:Chernoff1} then yields
 \begin{gather}
 C(F_0^{(A)}, F_1^{(A)}) = \frac{(\delta^{\top} \Sigma^{-1/2} (I - V_{d-1} V_{d-1}^{\top}) \Sigma^{1/2} \delta)^2}{\delta^{\top} \Sigma^{1/2} (I - V_{d-1} V_{d-1}^{\top}) \Sigma^{1/2} \delta} + \delta^{\top} \Sigma^{-1/2} V_{d-1} V_{d-1}^{\top} \Sigma^{-1/2} \delta \\
 C(F_0^{(B)}, F_1^{(B)}) = \delta^{\top} \Sigma^{-1/2} V_d V_d^{\top} \Sigma^{-1/2} \delta
 \end{gather}
 where $V_{d} V_{d-1}^{\top} = \Sigma^{1/2} U_d (U_d^{\top} \Sigma U_d)^{-1} U_d^{\top} \Sigma^{1/2}$ is the orthogonal projection onto the column space of $\Sigma^{1/2} U_d$. Hence $C(F_0^{(A)}, F_1^{(A)}) > C(F_0^{(B)}, F_1^{(B)})$ if and only if
 \begin{equation}
 \label{eq:general1}
 \frac{(\delta^{\top} \Sigma^{-1/2} (I - V_{d-1} V_{d-1}^{\top}) \Sigma^{1/2} \delta)^2}{\delta^{\top} \Sigma^{1/2} (I - V_{d-1} V_{d-1}^{\top}) \Sigma^{1/2} \delta} > \delta^{\top} \Sigma^{-1/2} (V_d V_d^{\top} - V_{d-1} V_{d-1}^{\top}) \Sigma^{-1/2} \delta.
 \end{equation}
We recover Eq.~\ref{eq:chernoff_arbitrary1}
by letting $U_d$ be the matrix whose columns are the eigenvectors corresponding to the $d$ largest eigenvalue of $\Sigma$.
\end{remark}

We next present a result relating the Chernoff information for \Lol~and {\color{black}\Rrlda}.
\begin{thm}
\label{thm:Chernoff2}
Assume the setting of Theorem~\ref{thm:Chernoff1}.
Let $C = \tilde{U}_d^{\top}$ where $\tilde{U}_d$ is the $p \times d$ matrix whose columns are the $d$ largest eigenvectors of the {\em pooled} covariance matrix $\tilde{\Sigma} = \mathbb{E}[(X - \tfrac{\mu_0 + \mu_1}{2})(X - \tfrac{\mu_0 + \mu_1}{2})^{\top}]$.
Then $C$ is the linear transformation for \Pca~and
\begin{equation}
\begin{split}
C(F_0^{(A)}, F_1^{(A)}) - C(F_0^{(C)}, F_1^{(C)})
&=
\frac{(\delta^{\top} (I - U_{d-1} U_{d-1}^{\top}) \delta)^{2} } {\delta^{\top} (\Sigma - \Sigma _{d-1})\delta} + \delta^{\top} \Sigma_{d-1}^{\dagger} \delta -
\delta^{\top} \tilde{\Sigma}_d^{\dagger} \delta - \frac{(\delta^{\top} \tilde{\Sigma}_d^{\dagger} \delta)^2}{4 - \delta^{\top} \tilde{\Sigma}_d^{\dagger} \delta}
\\ &=
\frac{(\delta^{\top} (I - U_{d-1} U_{d-1}^{\top}) \delta)^{2} } {\delta^{\top} (\Sigma - \Sigma _{d-1})\delta} + \delta^{\top} \Sigma_{d-1}^{\dagger} \delta -
\frac{4 \delta^{\top} \tilde{\Sigma}_d^{\dagger} \delta}{4 - \delta^{\top} \tilde{\Sigma}_d^{\dagger} \delta}.
\end{split}
\end{equation}
where $\tilde{\Sigma}_d = \tilde{U}_d \tilde{S}_d \tilde{U}_d^{\top}$ is the best rank $d$ approximation to $\tilde{\Sigma} = \Sigma + \tfrac{1}{4} \delta \delta^{\top}$.
\end{thm}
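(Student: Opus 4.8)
The plan is to reduce the whole statement to Lemma~\ref{lem:chernoff-1} together with two rank-one matrix updates handled by the Sherman--Morrison formula, exactly in the spirit of the proof of Theorem~\ref{thm:Chernoff1}. First I would pin down the structural fact $\tilde{\Sigma} = \Sigma + \tfrac14 \delta \delta^{\top}$ asserted in the statement: conditioning on $Y$ with $\pi_0 = \pi_1 = \tfrac12$ and noting that each class mean $\mu_j$ lies at distance $\tfrac12 \|\delta\|$ from the pooled mean $\tfrac{\mu_0 + \mu_1}{2}$, the conditional contributions $\Sigma + (\mu_j - \tfrac{\mu_0+\mu_1}{2})(\mu_j - \tfrac{\mu_0+\mu_1}{2})^{\top}$ each equal $\Sigma + \tfrac14\delta\delta^{\top}$, so their average is $\tilde{\Sigma} = \Sigma + \tfrac14 \delta\delta^{\top}$. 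Since $\Sigma \succ 0$ we have $\tilde{\Sigma} \succeq \Sigma \succ 0$, so its top $d$ eigenvalues are positive; by Eckart--Young the matrix $\tilde{\Sigma}_d = \tilde{U}_d \tilde{S}_d \tilde{U}_d^{\top}$ built from the top $d$ eigenvectors is the best rank-$d$ approximation of $\tilde{\Sigma}$, and $C = \tilde{U}_d^{\top}$ is therefore precisely the rank-$d$ \Pca~transformation, so Lemma~\ref{lem:chernoff-1} applies to it.

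Next I would compute the \Pca~Chernoff term. Writing $\Sigma = \tilde{\Sigma} - \tfrac14 \delta\delta^{\top}$ and $w := \tilde{U}_d^{\top} \delta = C\delta$, and using $\tilde{U}_d^{\top} \tilde{\Sigma} \tilde{U}_d = \tilde{S}_d$, we get $C \Sigma C^{\top} = \tilde{S}_d - \tfrac14 w w^{\top}$. Before inverting I would verify positive definiteness: $\tilde{S}_d - \tfrac14 w w^{\top} \succ 0$ iff $w^{\top} \tilde{S}_d^{-1} w = \delta^{\top} \tilde{\Sigma}_d^{\dagger} \delta < 4$, and since $\delta^{\top} \tilde{\Sigma}_d^{\dagger}\delta \le \delta^{\top} \tilde{\Sigma}^{-1} \delta = \tfrac{4\,\delta^{\top}\Sigma^{-1}\delta}{4 + \delta^{\top}\Sigma^{-1}\delta} < 4$ (the last equality being Sherman--Morrison applied to $\tilde{\Sigma} = \Sigma + \tfrac14\delta\delta^{\top}$), this holds whenever $\delta \neq 0$. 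Then Sherman--Morrison gives, with $s := \delta^{\top}\tilde{\Sigma}_d^{\dagger}\delta$,
\[
\delta^{\top} C^{\top} (C \Sigma C^{\top})^{-1} C \delta = w^{\top}\Bigl(\tilde{S}_d - \tfrac14 w w^{\top}\Bigr)^{-1} w = s + \frac{s^2}{4 - s} = \frac{4 s}{4 - s},
\]
which, following the same normalization convention as in Theorem~\ref{thm:Chernoff1}, is $C(F_0^{(C)}, F_1^{(C)})$; the two equivalent right-hand sides are exactly the two displayed forms of the \Pca~term in the statement.

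Finally I would import the \Lol~term verbatim from the proof of Theorem~\ref{thm:Chernoff1}, namely $C(F_0^{(A)}, F_1^{(A)}) = \tfrac{(\delta^{\top}(I - U_{d-1}U_{d-1}^{\top})\delta)^2}{\delta^{\top}(\Sigma - \Sigma_{d-1})\delta} + \delta^{\top}\Sigma_{d-1}^{\dagger}\delta$, and subtract the \Pca~term above; writing $\tfrac{4s}{4-s} = s + \tfrac{s^2}{4-s}$ in the subtraction yields the first claimed identity, and combining the two $s$-terms back into $\tfrac{4s}{4-s}$ yields the second. I do not expect a genuine obstacle: the argument is the same rank-one algebra already executed in Theorem~\ref{thm:Chernoff1}, and the only points requiring care are (i) the positive-definiteness check for $C\Sigma C^{\top}$ so that Lemma~\ref{lem:chernoff-1} may be invoked, and (ii) keeping the eigenbasis $\tilde{U}_d$ of the \emph{pooled} covariance $\tilde{\Sigma}$ (which defines \Pca) carefully distinct from the eigenbasis $U_{d-1}$ of $\Sigma$ (which appears inside the \Lol~term).
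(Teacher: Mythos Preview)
Your proposal is correct and follows essentially the same route as the paper: establish $\tilde{\Sigma}=\Sigma+\tfrac14\delta\delta^{\top}$, write $C\Sigma C^{\top}=\tilde{S}_d-\tfrac14 ww^{\top}$, apply Sherman--Morrison to obtain $C(F_0^{(C)},F_1^{(C)})=\tfrac{4s}{4-s}$ with $s=\delta^{\top}\tilde{\Sigma}_d^{\dagger}\delta$, and then subtract the \Lol~term already derived in Theorem~\ref{thm:Chernoff1}. Your only deviation is the extra (and welcome) verification that $s<4$ so that $C\Sigma C^{\top}\succ 0$ and the inversion is legitimate, a point the paper's proof leaves implicit.
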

\begin{proof}
Assume, without loss of generality, that $\mu_1 = -\mu_0 = \mu$. We then have
$$\tilde{\Sigma} = \mathbb{E}[X X^{\top}] = \pi \Sigma + \pi \mu_0 \mu_0^{\top} + (1 - \pi) \Sigma + (1 - \pi) \mu_1 \mu_1^{\top} = \Sigma + \mu \mu^{\top} = \Sigma + \tfrac{1}{4} \delta \delta^{\top}. $$

Therefore
$$ (C \Sigma C^{\top})^{-1} = \bigl(\tilde{U}_{d}^{\top} \Sigma \tilde{U}_d \bigr)^{-1} = \bigl( \tilde{U}_d^{\top} (\tilde{\Sigma} - \tfrac{1}{4} \delta \delta^{\top}) \tilde{U}_d\bigr)^{-1} = \bigl(\tilde{S}_d - \tfrac{1}{4} \tilde{U}_d^{\top} \delta \delta^{\top} \tilde{U}_d \bigr)^{-1} = \tilde{S}_d^{-1} + \frac{\tilde{S}_d^{-1} \tilde{U}_d^{\top} \delta \delta^{\top} \tilde{U}_d \tilde{S}_d^{-1}}{4 - \delta^{\top} \tilde{U}_d \tilde{S}_d^{-1} \tilde{U}_d^{\top} \delta} $$
where $\tilde{S}_d$ is the diagonal matrix containing the $d$ largest eigenvalues of $\tilde{\Sigma}$. Hence
\begin{equation}
\begin{split}
C(F_0^{(C)}, F_1^{(C)}) = \delta^{\top} C^{\top} (C \Sigma C^{\top})^{-1} C \delta & = \delta^{\top} \tilde{U}_d \Bigl(\tilde{S}_d^{-1} + \frac{\tilde{S}_d^{-1} \tilde{U}_d^{\top} \delta \delta^{\top} \tilde{U}_d \tilde{S}_d^{-1}}{4 - \delta^{\top} \tilde{U}_d \tilde{S}_d^{-1} \tilde{U}_d^{\top} \delta} \Bigr) \tilde{U}_d^{\top} \delta \\ &= \delta^{\top} \tilde{U}_d \tilde{S}_d^{-1} \tilde{U}_d^{\top} \delta + \frac{(\delta^{\top} \tilde{U}_d \tilde{S}_d^{-1} \tilde{U}_d^{\top} \delta)^{2}}{4 - \delta^{\top} \tilde{U}_d \tilde{S}_d^{-1} \tilde{U}_d^{\top} \delta} \\
&= \delta^{\top} \tilde{\Sigma}_d^{\dagger} \delta + \frac{(\delta^{\top} \tilde{\Sigma}_d^{\dagger} \delta)^2}{4 - \delta^{\top} \tilde{\Sigma}_d^{\dagger} \delta} = \frac{4 \delta^{\top} \tilde{\Sigma}_d^{\dagger} \delta}{4 - \delta^{\top} \tilde{\Sigma}_d^{\dagger} \delta}.
\end{split}
\end{equation}
as desired.
\end{proof}

\begin{remark}
We recall that the \Lol~projection $A = [\delta \mid U_{d-1}]^{\top}$ yields
$$ C(F_0^{(A)}, F_1^{(A)}) = \frac{(\delta^{\top} (I - U_{d-1} U_{d-1}^{\top}) \delta)^{2}}{\delta^{\top} (\Sigma - \Sigma_{d-1}) \delta} + \delta^{\top} \Sigma_{d-1}^{\dagger} \delta. $$

To illustrate the difference between the \Lol~projection and that based on the eigenvectors of the {\em pooled} covariance matrix, consider the following simple example. Let $\Sigma = \mathrm{diag}(\lambda_1, \lambda_2, \dots, \lambda_p)$ be a diagonal matrix with $\lambda_1 \geq \lambda_2 \geq \dots \geq \lambda_p$. Also let $\delta = (0,0,\dots,0,s)$. Suppose furthermore that $\lambda_p + s^{2}/4 < \lambda_d$. Then we have $\tilde{\Sigma}_d = \mathrm{diag}(\lambda_1, \lambda_2, \dots, \lambda_d, 0, 0, \dots, 0)$. Thus $\tilde{\Sigma}_d^{\dagger} = \mathrm{diag}(1/\lambda_1, 1/\lambda_2, \dots, 1/\lambda_d, 0,0, \dots, 0)$ and $\delta^{\dagger} \tilde{\Sigma}_d^{\dagger} \delta = 0$. Therefore, $C(F_0^{(B)}, F_1^{(B)}) = 0$.

On the other hand, we have
$$ C(F_0^{(A)}, F_1^{(A)}) = \frac{(\delta^{\top} (I - U_{d-1} U_{d-1}^{\top}) \delta)^{2}}{\delta^{\top} (\Sigma - \Sigma_{d-1}) \delta} + \delta^{\top} \Sigma_{d-1}^{\dagger} \delta = \frac{s^4}{s^2 \lambda_p} + 0 = s^{2}/\lambda_p.$$
\end{remark}
A more general form of the previous observation is the following result which shows that \Lol~is preferable over \Pca~when the dimension $p$ is sufficiently large.
\begin{proposition}
Let $\Sigma$ be a $p \times p$ covariance matrix of the form
$$ \Sigma = \begin{bmatrix} \Sigma_{d} & 0 \\ 0 & \Sigma_{d}^{\perp} \end{bmatrix} $$ where $\Sigma_{d}$ is a $d \times d$ matrix.
Let $\lambda_1 \geq \lambda_2 \geq \dots \geq \lambda_p$ be the eigenvalues of $\Sigma$, with $\lambda_1, \lambda_2, \dots, \lambda_d$ being the eigenvalues of $\Sigma_{d}$. Suppose that the entries of $\delta$ are i.i.d. with the following properties.
\begin{enumerate}
\item $\delta_i \sim Y_i \ast N(\tau, \sigma^2)$ where $Y_1, Y_2, \dots, Y_p \overset{\mathrm{i.i.d.}}{\sim} \mathrm{Bernoulli}(1 - \epsilon)$.
\item $p(1 - \epsilon) \rightarrow \theta$ as $p \rightarrow \infty$ for some constant $\theta$.
\end{enumerate}
Then there exists a constant $C > 0$ such that if $\lambda_d - \lambda_{d+1} \geq C \theta \tau^2 \log p$, then, with probability at least $\epsilon^{d}$
$$ C(F_0^{(A)}, F_1^{(A)}) > C(F_0^{(B)}, F_1^{(B)}) = 0
$$
\end{proposition}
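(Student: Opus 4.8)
The plan is to exhibit one event---depending only on \emph{which} coordinates of $\delta$ are nonzero, not on their values---on which the block structure of $\Sigma$ makes the \Pca~transformation $B$ (projection onto the top $d$ eigenvectors of the pooled covariance $\tilde\Sigma = \Sigma + \tfrac14\delta\delta^{\top}$, as in Theorem~\ref{thm:Chernoff2}) completely blind to $\delta$ while the \Lol~transformation $A = [\delta \mid U_{d-1}]$ is not; the two Chernoff informations are then read off from the formulas in Theorems~\ref{thm:Chernoff1}--\ref{thm:Chernoff2} and the probability of the event is estimated. Concretely, let $E_0 = \{\delta_1 = \dots = \delta_d = 0\} = \{Y_1 = \dots = Y_d = 0\}$, so $\PP[E_0] = \epsilon^{d}$ and on $E_0$ the vector $\delta = (0,\dots,0,\delta_\perp)$ is supported on the last $p-d$ coordinates; note that $E_0$ is independent of every event measurable with respect to $\delta_\perp$. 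I would work on $E = E_0 \cap \{\delta_\perp \neq 0\} \cap \{\tfrac14\|\delta_\perp\|^{2} < \lambda_d - \lambda_{d+1}\}$ and show $C(F_0^{(B)},F_1^{(B)}) = 0 < C(F_0^{(A)},F_1^{(A)})$ on all of $E$; this is the general form of the example in the Remark preceding the statement.

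For the deterministic step: since $\Sigma$ is block diagonal with blocks $\Sigma_d,\Sigma_d^{\perp}$ and, by hypothesis, $\lambda_1,\dots,\lambda_d$ are the eigenvalues of $\Sigma_d$, the matrices $U_{d-1},\Sigma_{d-1},\Sigma_{d-1}^{\dagger}$ are supported on the first $d$ coordinates and so annihilate $\delta$ on $E$: $U_{d-1}^{\top}\delta = 0$, $\Sigma_{d-1}^{\dagger}\delta = 0$, $\delta^{\top}\Sigma\delta = \delta_\perp^{\top}\Sigma_d^{\perp}\delta_\perp$. Plugging this into the \Lol~Chernoff expression $(\delta^{\top}(I-U_{d-1}U_{d-1}^{\top})\delta)^{2}/(\delta^{\top}(\Sigma-\Sigma_{d-1})\delta) + \delta^{\top}\Sigma_{d-1}^{\dagger}\delta$ obtained in the proof of Theorem~\ref{thm:Chernoff1} gives
\[
C(F_0^{(A)},F_1^{(A)}) = \frac{(\delta^{\top}\delta)^{2}}{\delta^{\top}\Sigma\delta} = \frac{\|\delta_\perp\|^{4}}{\delta_\perp^{\top}\Sigma_d^{\perp}\delta_\perp} \ge \frac{\|\delta_\perp\|^{2}}{\lambda_{d+1}} > 0 ,
\]
using $\delta_\perp\neq 0$ and $\lambda_{\max}(\Sigma_d^{\perp}) = \lambda_{d+1}$. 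For \Pca, $\tilde\Sigma = \Sigma + \tfrac14\delta\delta^{\top}$ is again block diagonal on $E$, its bottom block $\Sigma_d^{\perp} + \tfrac14\delta_\perp\delta_\perp^{\top}$ having largest eigenvalue at most $\lambda_{d+1} + \tfrac14\|\delta_\perp\|^{2} < \lambda_d$ by Weyl's inequality and the defining inequality of $E$; hence the $d$ largest eigenvalues of $\tilde\Sigma$ are $\lambda_1,\dots,\lambda_d$ with a strict gap to the next, $\tilde U_d$ is well defined with column span $\mathrm{span}\{e_1,\dots,e_d\}$, so $\tilde U_d^{\top}\delta = 0$, $\delta^{\top}\tilde\Sigma_d^{\dagger}\delta = 0$, and Theorem~\ref{thm:Chernoff2} gives $C(F_0^{(B)},F_1^{(B)}) = 0$.

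For the probability: by independence $\PP[E] = \epsilon^{d}\,\PP[\{\delta_\perp\neq 0\}\cap\{\tfrac14\|\delta_\perp\|^{2} < \lambda_d-\lambda_{d+1}\}]$. Using $\EE\|\delta_\perp\|^{2} = (p-d)(1-\epsilon)(\tau^{2}+\sigma^{2}) \to \theta(\tau^{2}+\sigma^{2})$, Markov's inequality bounds $\PP[\|\delta_\perp\|^{2} \ge 4(\lambda_d-\lambda_{d+1})]$ by $(p-d)(1-\epsilon)(\tau^{2}+\sigma^{2})/(4C\theta\tau^{2}\log p)$, which $\to 0$ as soon as $\lambda_d - \lambda_{d+1}\ge C\theta\tau^{2}\log p$; together with $\PP[\delta_\perp = 0] = \epsilon^{p-d}$ this gives $\PP[E] \ge \epsilon^{d}(1 - \epsilon^{p-d} - o(1))$, i.e.\ the asserted $\epsilon^{d}$ up to the lower-order factor $1 - \epsilon^{p-d} \to 1 - e^{-\theta}$ (unavoidable, since on $\{\delta\equiv 0\}$ both Chernoff informations vanish). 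The hard part---the only non-routine estimate---will be this tail bound on $\|\delta_\perp\|^{2}$, a sum of a $\mathrm{Binomial}(p-d,1-\epsilon)$-random number of squared $N(\tau,\sigma^{2})$ variables, together with the check that the $\log p$ in the threshold is exactly what drives $\PP[\|\delta_\perp\|^{2}\ge 4(\lambda_d-\lambda_{d+1})]$ to $0$; Markov suffices for an $o(1)$ bound, while a Bernstein/sub-exponential estimate (treating $\sigma$ as a fixed constant) would give a quantitative $p^{-c}$ tail and fix how large $C$ must be. Everything else is block-diagonal bookkeeping plus a single use of Weyl's inequality.
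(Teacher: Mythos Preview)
Your proposal follows essentially the same route as the paper: condition on $E_0=\{\delta_1=\dots=\delta_d=0\}$ (probability $\epsilon^d$), use the block structure of $\tilde\Sigma$ together with Weyl's inequality to force the top-$d$ eigenspace of $\tilde\Sigma$ to coincide with that of $\Sigma$ whenever $\tfrac14\|\delta_\perp\|^2<\lambda_d-\lambda_{d+1}$, and read off the two Chernoff informations from Theorems~\ref{thm:Chernoff1}--\ref{thm:Chernoff2}. The only substantive difference is in the tail estimate for $\|\delta_\perp\|^2$: the paper identifies the limiting distribution of $\|\delta_\perp\|^2$ as a Poisson($\theta$) compound of non-central $\chi^2_1(\tau)$'s and argues (somewhat informally) that the $\log p$ threshold drives its tail to zero, whereas you use Markov's inequality on $\mathbb{E}\|\delta_\perp\|^2=(p-d)(1-\epsilon)(\tau^2+\sigma^2)\to\theta(\tau^2+\sigma^2)$, which is more elementary and already gives the required $o(1)$. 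Your treatment is also more careful on one point the paper elides: you note that on $\{\delta\equiv 0\}$ both Chernoff informations vanish, so the strict inequality can hold with probability at most $\epsilon^d(1-\epsilon^{p-d})\to\epsilon^d(1-e^{-\theta})$, not literally $\epsilon^d$; this is a genuine (minor) gap in the statement that your argument exposes rather than introduces.
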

\begin{proof}
The above construction of $\Sigma$ and $\delta$ implies, with probability at least $\epsilon^{d}$, that  the covariance matrix for $\tilde{\Sigma}$ is of the form
$$ \tilde{\Sigma} = \begin{bmatrix} \Sigma_{d} & 0 \\ 0 & \Sigma_{d}^{\perp} + \tfrac{1}{4} (\tilde{\delta} \tilde{\delta}^{\top}) \end{bmatrix} $$
where $\tilde{\delta} \in \mathbb{R}^{p - d}$ is formed by excluding the first $d$ elements of $\delta$.
Now, if $\lambda_{d+1} + \tfrac{1}{4} \|\tilde{\delta}\|^{2} < \lambda_d$, then
the $d$ largest eigenvalues of $\tilde{\Sigma}$ are still $\lambda_1, \lambda_2, \dots, \lambda_d$, and thus the eigenvectors corresponding to the $d$ largest eigenvalues of $\tilde{\Sigma}$ are the same as those for the $d$ largest eigenvalues of $\Sigma$. That is to say, $$ \lambda_{d+1} + \tfrac{1}{4} \|\tilde{\delta}\|^{2} < \lambda_d \Longrightarrow \tilde{\Sigma}_{d}^{\dagger} = \Sigma_{d}^{\dagger} \Longrightarrow \delta^{\top} \tilde{\Sigma}_{d}^{\dagger} \delta = 0 \Longrightarrow C(F_0^{(B)}, F_1^{(B)}) = 0.$$

We now compute the probability that $\lambda_{d+1} + \tfrac{1}{4} \|\tilde{\delta}\|^{2} < \lambda_d$. Suppose for now that $\epsilon > 0$ is fixed and does not vary with $p$. We then have
$$ \frac{\sum_{i=d+1}^{p} \delta_i^{2} - (p - d) (1 - \epsilon) \tau^2}{\sqrt{(p - d)(2 (1 - \epsilon) (2 \tau^2 \sigma^2 + \sigma^4) + \epsilon (1 - \epsilon) (\tau^4 + 2 \tau^2 \sigma^2 + \sigma^4))}} \overset{\mathrm{d}}{\longrightarrow} N(0,1).$$

Thus, as $p \rightarrow \infty$, the probability that $\lambda_{d+1} + \tfrac{1}{4} \|\tilde{\delta}\|^{2} < \lambda_d$ converges to that of
$$\Phi\Bigl(\frac{4(\lambda_d - \lambda_{d+1}) - (p - d) (1 - \epsilon) \tau^2}{\sqrt{(p - d)(2 (1 - \epsilon) (2 \tau^2 \sigma^2 + \sigma^4) + \epsilon (1 - \epsilon) (\tau^4 + 2 \tau^2 \sigma^2 + \sigma^4))}} \Bigr).$$ This probability can be made arbitrarily close to $1$ provided that $\lambda_{d} - \lambda_{d+1} \geq Cp(1 - \epsilon) \tau^2$ for all sufficiently large $p$ and for some constant $C > 1/4$. Since the probability that $\delta_1 = \delta_2 = \dots = \delta_d$ is at least $\epsilon^{d}$, we thus conclude that for sufficiently large $p$, with probability at least $\epsilon^{d}$,
$$C(F_0^{(B)}, F_1^{(B)}) = 0 < C(F_0^{(A)}, F_1^{(A)}).$$

In the case where $\epsilon = \epsilon(p) \rightarrow 1$ as $p \rightarrow \infty$ such that $p(1 - \epsilon) \rightarrow \theta$ for some constant $\theta$, then the probability that $\lambda_{d+1} + \tfrac{1}{4} \|\tilde{\delta}\|^{2} < \lambda_d$ converges to the probability that
$$\frac{1}{4} \sum_{i=1}^{K} \sigma^2 \chi_{1}^{2}(\tau) \geq \lambda_{d} - \lambda_{d+1} $$
where $K$ is Poisson distributed with mean $\theta$ and $\chi_{i}^{2}(\tau)$ is the non-central chi-square distribution with one degree of freedom and non-centrality parameter $\tau$. Thus if $\lambda_{d} - \lambda_{d+1} \geq C \theta \tau^2 \log{p}$ for sufficiently large $p$ and for some constant $C$, then this probability can also be made arbitrarily close to $1$.
\end{proof}

{\color{black}
\begin{remark}
The previous comparisons are done for the case of $C = 2$ classes. Extending these comparisons to the case of $C > 2$ classes is, however, non-trivial. More precisely, suppose we have $Y \in \{1,2,\dots, C\}$ and that, conditional on $Y = c$, $X \sim \mathcal{N}(\mu_c, \Sigma)$ is multivariate normal with mean $\mu_c$ and {\em common} covariance matrix $\Sigma$. Then, given $X = x$, the Bayes optimal classifier for $Y$ is still
$$g_{\mathrm{LDA}}(x) = \argmin_{y \in \{1,2,\dots,C\}} \Bigl[\frac{1}{2}(x - \mu_y)^{\top} \Sigma^{-1}(x - \mu_y) - \log \pi_y\Bigr] = \argmin_{y \in \{1,2,\dots,C\}} \Bigl[-x^{\top}\Sigma^{-1} \mu_y + \frac{1}{2} \mu_y^{\top} \Sigma^{-1} \mu_y - \log \pi_y
\Bigr]$$
Taking $\tfrac{1}{2} \mu_y^{\top} \Sigma^{-1} \mu_y - \log \pi_y$ as either a given constant or as an intercept term to be learned or estimated, the reduced-rank LDA for $C > 2$ classes still corresponds to looking at the top $d$ eigenvectors of $\Sigma$. That is to say, we transform the predictor variables via $x \mapsto U_d x$ followed by performing LDA on the transformed data. Similarly, the PCA transformation corresponds to using the top $d$ eigenvectors of the pooled covariance matrix $\tilde{\Sigma} = \mathbb{E}[(X - \sum_{c} \pi_c \mu_c)(X - \sum_{c} \pi_c \mu_c)^{\top}]$ followed by performing LDA.
Suppose we now compare LOL, rrLDA, and PCA in this multi-class setting. 
Let $A \colon X \mapsto AX$ be a linear transformation. Then by Eq.~\eqref{eq:chernoff-multiple} and Eq.~\eqref{eq:lem:chernoff-1},  the Chernoff information for the transformed data in this multi-class setting is 
\begin{equation*}
\begin{split}
\min_{c \not = c'} \frac{1}{8} (\mu_c - \mu_{c'})^{\top} A^{\top} (A \Sigma A^{\top})^{-1} A (\mu_c - \mu_{c'}). 
\end{split}
\end{equation*}
We now see that, in the case of rrLDA and PCA the linear transformation $A$ depends only on the covariance matrix $\Sigma$ and $\tilde{\Sigma}$, respectively. That is to say, the linear transformation $A$ does not depend on the choice of $c$ and $c'$. 
In contrast, currently for LOL the linear transformation $A$ depends on both $\Sigma$ as well as $\mu_c - \mu_{c'}$. In other words, there is no single choice for $A$ but rather that $A$ changes as $c, c'$ changes. Direct comparison, in the multi-classes setting, between LOL and either of rrLDA or PCA  is thus an open problem that we leave for future work.
Finally we note that if we allow the linear transformation for LOL to vary with the classes $c$ and $c'$, i.e., taking a one-vs-one approach to multi-classes classification, then the resulted presented in this paper are valid for all pairs $c, c'$.
\end{remark}}

\subsection{Finite Sample Performance}
We now consider the finite sample performance of \Lol~and \Pca-based classifiers in the high-dimensional setting with small or moderate sample sizes, e.g., when $p$ is comparable to $n$ or when $n \ll p$. Once again we assume that $X | Y = i \sim \mathcal{N}(\mu_i, \Sigma)$ for $i = 0, 1$. Furthermore, we also assume that $\Sigma$ belongs to the class $\Theta(p,r,k,\tau,\lambda)$ as defined below.

{\bf Definition} Let $\lambda > 0$, $\tau \geq 1$ and $k \leq p$ be given. Denote by $\Theta(p,r,k,\tau,\lambda,\sigma^2)$ the collection of matrices $\Sigma$ such that

$$ \Sigma = V \Lambda V^{\top} + \sigma^2 I $$
where $V$ is a $p \times r$  matrix with orthonormal columns and $\Lambda$ is a $r \times r$ diagonal matrix whose diagonal entries $\lambda_1, \lambda_2, \dots, \lambda_r$ satisfy $\lambda \geq \lambda_1 \geq \lambda_2 \geq \dots \geq \lambda_r \geq \lambda/\tau$. In addition, assume also that $|\mathrm{supp}(V)| \leq k$ where $\mathrm{supp}(V)$ denote the non-zero rows of $V$, i.e., $\mathrm{supp}(V)$ is the subset of $\{1,2,\dots,p\}$ such that $V_j \not = 0$ if and only if $j \in \mathrm{supp}(V)$.

We note that in general $r \leq k \ll p$ and $\lambda/\tau \gg \sigma^2$. We then have the following result.

\begin{thm}[\cite{cai-pca-1}]
\label{thm:cai1}
Suppose there exist constants $M_0$ and $M_1$ such that $M_1 \log p \geq \log n \geq M_0 \log \lambda$. Then there exists a constant $c_0 = c_0(M_0, M_1)$ depending on $M_0$ and $M_1$ such that for all $n$ and $p$ for which
$$ \frac{\tau k}{n} \log \frac{e p}{k} \leq c_0, $$ there exists an estimate $\hat{V}$ of $V$ such that

\begin{equation}
\label{eq:cai1} \sup_{\Sigma \in \Theta(p,r,k,\tau,\lambda,\sigma^2)} \mathbb{E} \|\hat{V} \hat{V}^{\top} - V V^{\top} \|^{2} \leq \frac{C k (\sigma \lambda + \sigma^2)}{n \lambda^{2}} \log \frac{e p}{k}
\end{equation}
where $C$ is a universal constant not depending on $p,r,k,\tau,\lambda$ and $\sigma^2$.
\end{thm}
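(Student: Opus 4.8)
The plan is to exhibit a sparse-PCA estimator that attains the rate in \eqref{eq:cai1} and to bound it; I would work with the combinatorial estimator---maximizing a quadratic form over $k$-sparse orthonormal $r$-frames---since it is the cleanest object to analyze, and remark that the computationally efficient adaptive estimator of \cite{cai-pca-1} meets the same bound via a more elaborate two-stage argument. Concretely, let $\hat\Sigma = \tfrac{1}{n}\sum_{i=1}^{n}(X_i - \bar X_{y_i})(X_i - \bar X_{y_i})^{\top}$ be the class-centered sample covariance, so that $\mathbb{E}[\hat\Sigma]$ agrees with $\Sigma$ up to an $O(1/n)$ correction and the class means cannot pollute the leading subspace, and set
\[
\hat V \in \argmax_{U}\ \operatorname{tr}(U^{\top}\hat\Sigma U),
\]
the maximum being over all $p\times r$ matrices $U$ with $U^{\top}U=I_r$ and at most $k$ nonzero rows. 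Since $V$ is feasible, $\operatorname{tr}(\hat V^{\top}\hat\Sigma\hat V)\ge\operatorname{tr}(V^{\top}\hat\Sigma V)$; writing $E=\hat\Sigma-\Sigma$ and $\Delta=\hat V\hat V^{\top}-VV^{\top}$, this rearranges to the basic inequality $\operatorname{tr}(V^{\top}\Sigma V)-\operatorname{tr}(\hat V^{\top}\Sigma\hat V)\le\langle E,\Delta\rangle$.

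The first ingredient is a restricted concentration lemma. Both $\hat V$ and $V$ are supported on index sets of size at most $k$, so only the restrictions $E_S$ with $|S|\le 2k$ are relevant. For a fixed $S$, $\hat\Sigma_S$ is a Gaussian sample covariance with population $\Sigma_S$, so a standard sub-Gaussian bound gives $\norm{\hat\Sigma_S-\Sigma_S}_{\mathrm{op}}\lesssim\norm{\Sigma_S}_{\mathrm{op}}\bigl(\sqrt{(|S|+t)/n}+(|S|+t)/n\bigr)$ with probability at least $1-e^{-t}$. Choosing $t\asymp k\log(ep/k)$ and union-bounding over the $\binom{p}{2k}\le(ep/2k)^{2k}$ size-$2k$ subsets, and invoking the hypothesis $\tau k\log(ep/k)/n\le c_0$, I obtain an event $\mathcal{E}$ of probability at least $1-(k/ep)^{2k}$ on which $\sup_{|S|\le 2k}\norm{E_S}_{\mathrm{op}}\lesssim(\lambda+\sigma^2)\sqrt{k\log(ep/k)/n}$. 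Crucially, I would also establish on $\mathcal{E}$ a sharper bound on the ``coupling block'' $V^{\top}E_S\,W_S$, where $W$ spans the complementary (noise) subspace: because this block mixes high-variance signal directions with low-variance noise directions, its operator norm is of the smaller order $\sqrt{(\sigma\lambda+\sigma^2)\,k\log(ep/k)/n}$ (the variance proxy reflected in \eqref{eq:cai1}), rather than the crude $(\lambda+\sigma^2)\sqrt{k\log(ep/k)/n}$.

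Given $\mathcal{E}$, I would close the estimate in two steps. \emph{Curvature:} since $V$ spans the top-$r$ eigenspace of $\Sigma$ and the gap between the $r$-th eigenvalue $\lambda_r+\sigma^2$ and the $(r{+}1)$-st eigenvalue $\sigma^2$ equals $\lambda_r\ge\lambda/\tau$, the basic inequality together with the quadratic-form lower bound $\operatorname{tr}(V^{\top}\Sigma V)-\operatorname{tr}(\hat V^{\top}\Sigma\hat V)\ge\tfrac{\lambda}{2\tau}\norm{\Delta}^2$ (in Frobenius norm; a Davis--Kahan/Wedin bound handles the spectral-norm version analogously) gives $\norm{\Delta}^2\lesssim\tfrac{\tau}{\lambda}\langle E,\Delta\rangle$. \emph{Sharpening:} rather than estimate $\langle E,\Delta\rangle$ by $\norm{E_S}_{\mathrm{op}}\norm{\Delta}_*$, I would decompose $\Delta$ relative to the projections $VV^{\top}$ and $I-VV^{\top}$ into a cross block and a noise--noise block. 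The cross block is controlled by the coupling-block bound times $\norm{\Delta}$, which produces exactly the factor $(\sigma\lambda+\sigma^2)\,k\log(ep/k)/n$; the noise--noise block contributes $\norm{E_S}_{\mathrm{op}}\norm{\Delta}^2$, which under $\tau k\log(ep/k)/n\le c_0$ is lower-order and is absorbed into the left side. Solving the resulting quadratic in $\norm{\Delta}$ yields $\norm{\Delta}^2\lesssim\tfrac{\tau^2}{\lambda^2}\cdot\tfrac{(\sigma\lambda+\sigma^2)\,k\log(ep/k)}{n}$ on $\mathcal{E}$. For the expectation I would use that $\norm{\Delta}^2$ is deterministically bounded by a constant, so the contribution of $\mathcal{E}^{c}$ is at most that constant times $(k/ep)^{2k}$, which is dominated by the main term once the hypotheses $M_0\log\lambda\le\log n\le M_1\log p$ are invoked (these keep $\lambda$ polynomial in $n$ and $\log n$ comparable to $\log p$).

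The main obstacle is the sharpening step: obtaining the variance factor $\sigma\lambda+\sigma^2$ in place of the crude $(\lambda+\sigma^2)^2$ requires both the block decomposition of $\Delta$ and a \emph{uniform-over-subsets} control of the coupling block $V^{\top}E_S W_S$ (not merely of $\norm{E_S}_{\mathrm{op}}$), and this localization must be interleaved with the combinatorial union bound while preserving the $\log(ep/k)$---rather than $\log p$---dependence. A secondary, more routine nuisance is handling the $O(1/n)$ bias of the class-centered $\hat\Sigma$ and the randomness of the class counts $n_0,n_1$; I would dispatch these by a concentration bound for $n_c/n$ together with the observation that an $O(1/n)$ perturbation of $\Sigma$ is negligible at the target rate.
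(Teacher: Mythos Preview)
The paper does not prove this theorem at all: note the attribution \texttt{[\textbackslash cite\{cai-pca-1\}]} in the theorem header. This is a result quoted from the sparse-PCA literature (Cai--Ma--Wu) and is used as a black box to derive the subsequent corollary about the sample versions of \Lol\ and \Pca. There is therefore no ``paper's own proof'' to compare your proposal against.

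That said, your sketch is broadly in the spirit of how the cited result is established---combinatorial $k$-sparse estimator, basic inequality, union bound over $\binom{p}{2k}$ supports, eigengap-based curvature, and a block decomposition to extract the sharp variance proxy $\sigma\lambda+\sigma^2$ rather than $(\lambda+\sigma^2)^2$. Two remarks. First, your curvature step uses the gap $\lambda_r\ge\lambda/\tau$, which produces a $\tau^2/\lambda^2$ prefactor in your final bound; the stated bound has no $\tau$ on the right-hand side (the constant $C$ is claimed universal), so you would need to be more careful---either the $\tau$-dependence is genuinely present in the original and suppressed here, or a sharper curvature/localization argument removes it. Second, you work with the class-centered sample covariance because of the surrounding \Lda\ context, but the cited theorem is a pure sparse-PCA statement about i.i.d.\ Gaussians with covariance $\Sigma\in\Theta(p,r,k,\tau,\lambda,\sigma^2)$; the class-centering adaptation is your addition, not part of the theorem as stated.
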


Theorem~\ref{thm:cai1} then implies the following result for comparing the Chernoff information of the sample version of \Lol~against that for \Pca.

\begin{coro}
Let $\Sigma \in \Theta(p,r,k,\tau,\lambda)$ as defined above. Suppose that $C(F_0^{(A)}, F_1^{(A)}) > C(F_0^{(B)}, F_1^{(B)})$ where $A$ and $B$ denote the \Lol~and \Pca~projection matrices based on the eigenvectors of $\Sigma$ associated with the $d \leq r$ largest eigenvalues, i.e, $A = [\delta | V_{1:d-1}]$ and $B = V_{1:d}$. Then there exists constants $M$ and $c$ such that if
$\log n \geq M \log \lambda$ and $\tfrac{\tau k}{n} \log \tfrac{ep}{k} \leq c$,
then there exists an estimate $\hat{V}$ of $V$ such that, with $\hat{A} = [\hat{\delta} | \hat{V}_{1:d-1]}]$ and $\hat{B} = [\hat{V}_{1:d}]$, we have
$$ \mathbb{E}[C(F_0^{(\hat{A})}, F_1^{(\hat{A})})] > \mathbb{E}[C(F_0^{(\hat{B})}, F_1^{(\hat{B})})]$$
\end{coro}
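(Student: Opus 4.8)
The plan is to treat the sample-based Chernoff informations $C(F_0^{(\hat A)},F_1^{(\hat A)})$ and $C(F_0^{(\hat B)},F_1^{(\hat B)})$ as small perturbations of their population counterparts (the \Lol~quantity for $A$ and the \Pca~quantity for $B$) and to transfer the \emph{strict} population inequality across the perturbation. Write $\rho := C(F_0^{(A)},F_1^{(A)}) - C(F_0^{(B)},F_1^{(B)}) > 0$ for the population gap guaranteed by hypothesis. The first step is a \emph{stability} estimate for the Chernoff functional. By Lemma~\ref{lem:chernoff-1}, for any linear map $M$ into $\mathbb{R}^{d}$ one has $C(F_0^{(M)},F_1^{(M)}) = \tfrac18 \| P_{\Sigma^{1/2} M^{\top}} \Sigma^{-1/2} \delta \|^2$, a quantity that depends on $M$ only through the orthogonal projector $Q_M$ onto the row space of $M$. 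Since every $\Sigma \in \Theta(p,r,k,\tau,\lambda,\sigma^2)$ satisfies $\sigma^2 I \preceq \Sigma \preceq (\lambda + \sigma^2) I$, the matrix square root $\Sigma^{1/2}$ is uniformly well conditioned over $\Theta$, so the subspace map $Q_M \mapsto P_{\Sigma^{1/2} M^{\top}}$ is Lipschitz; composing with the fixed bounded vector $\Sigma^{-1/2}\delta$ gives a constant $L = L(\lambda, \tau, \sigma^2, \|\delta\|)$, finite and uniform over $\Theta$, with
\[
\bigl| C(F_0^{(M)},F_1^{(M)}) - C(F_0^{(M')},F_1^{(M')}) \bigr| \;\le\; L \, \| Q_M - Q_{M'} \|_F .
\]

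The second step is subspace consistency. Theorem~\ref{thm:cai1} supplies an estimator $\hat V$ with $\mathbb{E}\| \hat V \hat V^{\top} - V V^{\top} \|^2 \le \eta_n^2$, where $\eta_n^2 := \tfrac{C k (\sigma \lambda + \sigma^2)}{n \lambda^2} \log \tfrac{ep}{k}$, and the hypotheses $\log n \ge M \log \lambda$ and $\tfrac{\tau k}{n} \log \tfrac{ep}{k} \le c$ force $\eta_n \to 0$. A Davis--Kahan argument, using the eigengap $\lambda_d - \lambda_{d+1} > 0$ implicit in working with the top eigenvectors, then yields $\mathbb{E}\| \hat V_{1:j} \hat V_{1:j}^{\top} - V_{1:j} V_{1:j}^{\top} \|_F^2 = O(\eta_n^2)$ for $j \in \{d-1, d\}$. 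For the mean-difference column of $\hat A$, taking $\hat\delta$ to be the sample mean difference (thresholded to exploit the support bound on $\delta$, if one is imposed) gives $\mathbb{E}\| \hat\delta - \delta \|^2 = o(\|\delta\|^2)$ by standard Gaussian-mean-estimation bounds, hence $\mathbb{E}\bigl\| \hat\delta/\|\hat\delta\| - \delta/\|\delta\| \bigr\|^2 \to 0$. Combining these, $\mathbb{E}\| Q_{\hat A} - Q_A \|_F^2 \to 0$ and $\mathbb{E}\| Q_{\hat B} - Q_B \|_F^2 \to 0$ as $n \to \infty$.

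The third step closes the argument. By the stability estimate and Cauchy--Schwarz,
\[
\mathbb{E}\, C(F_0^{(\hat A)},F_1^{(\hat A)}) \;\ge\; C(F_0^{(A)},F_1^{(A)}) - L \bigl( \mathbb{E}\| Q_{\hat A} - Q_A \|_F^2 \bigr)^{1/2},
\]
and symmetrically $\mathbb{E}\, C(F_0^{(\hat B)},F_1^{(\hat B)}) \le C(F_0^{(B)},F_1^{(B)}) + L ( \mathbb{E}\| Q_{\hat B} - Q_B \|_F^2 )^{1/2}$. Both correction terms tend to $0$, whereas $C(F_0^{(A)},F_1^{(A)}) - C(F_0^{(B)},F_1^{(B)}) = \rho$ is a fixed positive constant; hence for all $n$ beyond an explicit threshold (read off from $\eta_n$, $L$, and $\rho$) we get $\mathbb{E}\, C(F_0^{(\hat A)},F_1^{(\hat A)}) > \mathbb{E}\, C(F_0^{(\hat B)},F_1^{(\hat B)})$, which is the claim. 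The expectations throughout are over the training sample used to form $\hat V$ and $\hat\delta$.

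I expect the main obstacle to be controlling $\mathbb{E}\| Q_{\hat A} - Q_A \|_F$ through the $\hat\delta$ coordinate in the genuinely high-dimensional regime $p \gg n$: the raw sample mean difference has squared error of order $\operatorname{tr}(\Sigma)/n \asymp p\sigma^2/n$, which need not be small relative to $\|\delta\|^2$, so one must either invoke a sparsity assumption on $\delta$ together with a thresholded estimator, or argue more delicately that only the component of the $\hat\delta$-noise orthogonal to the already-estimated top-$d$ eigenspace enters the Chernoff functional. A secondary, more routine point is verifying that the Lipschitz constant $L$ is genuinely uniform over $\Theta$ and that none of the inverses in the Chernoff formula degenerate along the sequence; this follows from the spectral pinch $\sigma^2 I \preceq \Sigma \preceq (\lambda+\sigma^2) I$, but should be checked.
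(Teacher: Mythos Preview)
Your proposal follows essentially the same route as the paper: invoke Theorem~\ref{thm:cai1} to make $\|\hat V\hat V^{\top}-VV^{\top}\|$ small, then pass the strict population inequality $C(F_0^{(A)},F_1^{(A)})>C(F_0^{(B)},F_1^{(B)})$ through a continuity/perturbation argument for the Chernoff functional. The paper's own argument is in fact only a one-paragraph sketch---it simply asserts that choosing $M$ large enough and $c$ small enough makes the bound in Eq.~\eqref{eq:cai1} small, and that smallness of $\|\hat V\hat V^{\top}-VV^{\top}\|$ together with the strict population gap yields the conclusion---so your write-up is already more detailed than what appears in the paper (the Lipschitz stability of $M\mapsto C(F_0^{(M)},F_1^{(M)})$, the Davis--Kahan step from $V$ to $V_{1:j}$, and the separate treatment of $\hat\delta$ are all implicit rather than stated there).

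Your self-identified obstacle concerning $\hat\delta$ is genuine and is \emph{not} addressed by the paper either; the paper's sketch speaks only of $\hat V$ and never discusses why $\hat\delta$ is close enough to $\delta$ in the $p\gg n$ regime. Likewise, the internal eigengap needed for the Davis--Kahan step (to pass from the full $\hat V\hat V^{\top}$ to $\hat V_{1:d-1}\hat V_{1:d-1}^{\top}$) is not guaranteed by the class $\Theta(p,r,k,\tau,\lambda,\sigma^2)$ as defined, and the paper does not comment on it. So you have correctly located the soft spots in the argument; the paper simply leaves them unexamined.
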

The above corollary states that for $\Sigma \in \Theta(p,r,k,\tau,\lambda)$, then provided that the Chernoff information of the population version of \Lol~is larger than the Chernoff information of the population version of \Pca, we can choose $n$ sufficiently large (as compared to $\lambda$ and $\tau$ and $k$) such that the expected Chernoff information for the sample version of \Lol~is also larger than the expected Chernoff information of the sample version of \Pca. We emphasize that it is necessary that the \Lol~and the \Pca~version are both projected into the top $d \leq r$ dimension of the sample covariance matrices. The constants $M$ and $c$ in the statement of the above corollary are chosen so that $M$ (which depends on $M_0$ and $M_1$ in the statement of Theorem~\ref{thm:cai1}) is sufficiently large and $c$ (which depends on $c_0$) is sufficiently small to ensure that the bound in Eq.~\eqref{eq:cai1} is sufficiently small. If $C(F_0^{(A)}, F_1^{(A)}) > C(F_0^{(B)}, F_1^{(B)})$ and $\|\hat{V} \hat{V}^{\top} - VV^{\top}\|$ is sufficiently small, then $\mathbb{E}[C(F_0^{(\hat{A})}, F_1^{(\hat{A})})] > \mathbb{E}[C(F_0^{(\hat{B})}, F_1^{(\hat{B})})t]$ as desired.

\section{Real-Data Performance Analysis}
\label{app:neuroimaging}

\Pca, the industry-standard dimensionality reduction technique for high-dimensional problems, is compared to \Lol, \Rp, and {\color{black}\Rrlda}~ in terms of cross-validated classification error.

In the experiments, we used $k$ fold cross-validation. Testing sets were rotated across all folds, with the training sets comprising the remaining $k-1$ folds. A low-dimensional projection matrix $\pmb A$ is first learned through the training set, and the low-dimensional training points are then used to train an \Lda~classifier $C$. The testing points are embedded via $\pmb A$, and classification error is determined using the trained classifier $C$. 

Performance is assessed using Cohen's kappa \cite{cohen}, which normalizes the classification error typically between $0$ (the classifier performs no better than the classifier which guesses the most-likely class from the training set, the \texttt{chance}~ classifier) and $1$ (the trained classifier performs perfectly). Negative scores can be achieved if the trained classifier performs worse than the \texttt{chance}~ classifier. The effect size is measured as the difference between Cohen's kappa for the trained classifier after embedding with \Pca $\kappa(\Pca)$ and the trained classifier after embedding with technique $\varepsilon$, $\kappa(\varepsilon)$. Table \ref{tab:realdata} provides details about each neuroimaging dataset.

\begin{table}[h]
    \centering
    \begin{tabular}{| l | l | l | l | l | l | }
        \hline
        Problem & Sample Size ($n$) & Training Size$^*$ & $\#$ Features ($p$) & Classes ($K$) & Source \\
        \hline
        Templeton114 & 111  & 100 & $>1.5 \times 10^8$ & 2 & MRN \\
        BNU1 & 110 & 100 & $>1.5 \times 10^8$ & 2 & CoRR \cite{corr} \\
        BNU3 & 47 & 43 & $>1.5 \times 10^8$ & 2 & CoRR \cite{corr} \\
        SWU4 & 453 & 407 & $>1.5 \times 10^8$ & 2 & CoRR \cite{corr} \\
        KKI2009 & 42 & 38 & $>1.5 \times 10^8$ & 2 & KKI \cite{kki} \\
        Genomics & 340 & 306 & 745,184 & 2 & \citet{Douville2020Mar} \\
        \hline
    \end{tabular}
    \caption{Table of datasets used in this study. The top $5$ datasets (neuroimaging) are pre-processing by only registering the brains to the MNI152 template\cite{fsl1,fsl2,fsl3,mni152}. The neuroimaging dataset comprises a total of $5$ classification problems ($5$ datasets across a single sex classification task). The bottom dataset (genomics) is pre-processed by aligning sequencing data to $745,184$ amplicons on the human genome. The genomics dataset comprises two benchmark classification problems (sex or age).}
\label{tab:realdata}
\end{table}

\section{Extensions to Other Supervised Learning Problems}
\label{sec:other}

{\color{black} \subsection{Large numbers of classes}
Here, we explore an experiment in which the number of classes increases for a given simulation. We look at the multiclass hump-$K$ problem, described in Section \ref{sec:simulations}. In this simulation, while the space spanned by the differences of means conveys more information than the directions of maximal variance, we expect that the shift in the means for a given class at a given dimension should also increase the variance fractionally in that direction as well. Figure \ref{fig:khump}A shows the simulation setup, for $K=10$. Figure \ref{fig:khump}B indicates the misclassification rate as a function of Cohen's Kappa. We use Cohen's Kappa instead of the misclassification rate for direct evaluation since $K$ varies widely across these simulations at a fixed number of total samples $n=128$, making the difficulty of the problem as $K$ increases two-fold: not only are there more classes, but there are also fewer examples of each class per simulation setting. In all cases, the best random classifier would be the classifier that continually guesses a single class continuously, which has expected accuracy of $\frac{1}{K}$. On all simulations, we see that both \Pls\ and \Lol\ rapidly approach a higher Kappa statistic (better performance relative the random classifier) as they learn the space spanned by the differences of means. \Pls\ rapidly declines in performance as successive dimensions are added, and \Lol\ sees a small performance decline, as successive dimensions should convey no information regarding the class. \Pca\ is able to ultimately identify the space spanned by the differences of the means, but takes far more embedding dimensions to do so, and yields a lower Kappa statistic than either of the other two strategies.}

\begin{figure}
    \centering
    \includegraphics[width=\linewidth]{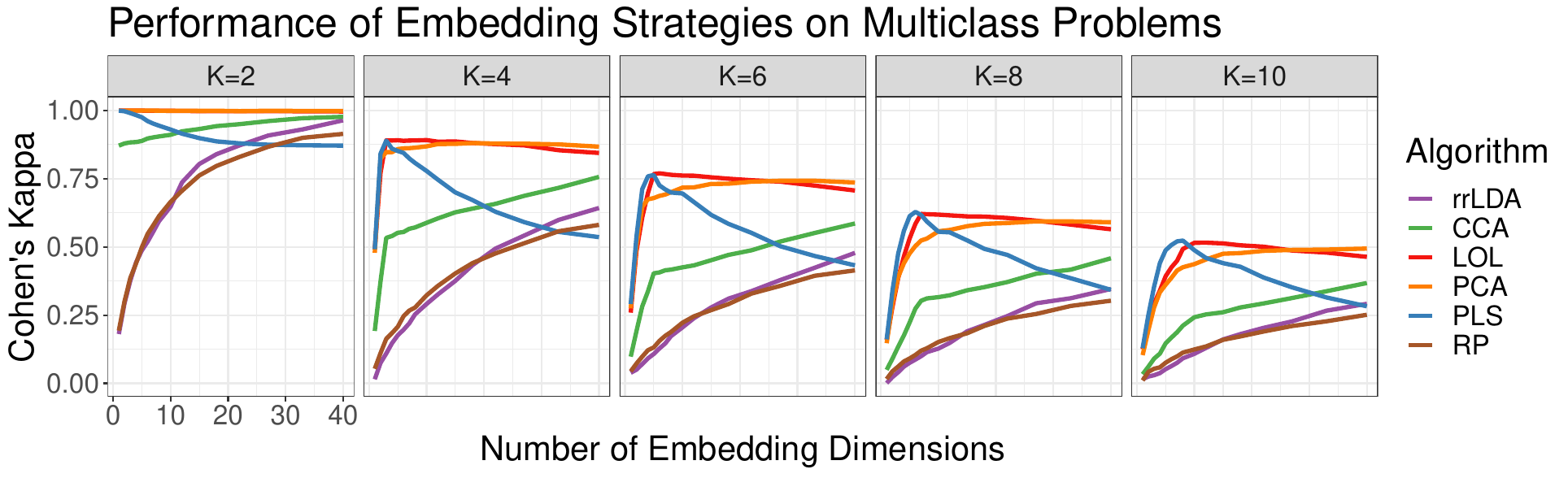}
    \caption{{\color{black}\textbf{The Multiclass Hump Simulation}. We show the results of the multiclass trunk problem, as the number of classes increases from $2$ to $10$, with the number of dimensions and the number of samples fixed. Effect size is measured with Cohen's Kappa. \Lol\ and \Pls\ provide better performance over competing techniques including \Pca, and this gap widens as the number of classes increases.}}
    \label{fig:khump}
\end{figure}

{\color{black}\subsection{Hypothesis Testing}}
The utility of incorporating the mean difference vector into supervised machine learning extends beyond  classification.  In particular, hypothesis testing can be considered as a special case of classification, with a particular loss function. We therefore apply the same idea to a hypothesis testing scenario.  The multivariate generalization of the t-test, called Hotelling's Test, suffers from the same problem as does the classification problem; namely, it requires inverting an estimate of the covariance matrix, which would result in a matrix that is low-rank and therefore singular in the high-dimensional setting.
To mitigate this issue in the hypothesis testing scenario, prior work applied similar tricks as they have done in the classification setting.
One particularly nice and related example is that of  Lopes et al. \cite{Lopes2011a}, who addresses this dilemma by using random projections to obtain a low-dimensional representation, following by applying Hotelling's Test in the lower-dimensional subspace.
Figure \ref{f:generalizations}{\color{magenta}A} and {\color{magenta}B} show the power of their test (labeled \sct{RP}) alongside the power of \Pca, \Lol,~and \Lfl~for two different conditions.
In each case we use the different approaches to project to low dimensions, followed by using Hotelling's test on the projected data.
In the first  example the true covariance matrix is diagonal, and in the second, the true covariance matrix is dense.
The horizontal axis on both panels characterizes the decay rate of the eigenvalues, so larger numbers imply the data is closer to low-rank
(see Methods for details).  The results indicate that the \Lol~test has higher power for essentially all scenarios.  Moreover, it is not merely replacing random projections with \Pca~(solid magenta line), nor simply incorporating the mean difference vector (dashed green line), but rather, it appears that \Lol~for testing uses both modifications to improve performance.

{\color{black}\subsection{Regression}}
High-dimensional  regression is another supervised learning method that can use the \Lol~idea. Linear regression, like classification and Hotelling's Test, requires inverting a  matrix as well.  By projecting the data onto a lower-dimensional subspace first, followed by linear regression on the low-dimensional data, we can mitigate the curse of high-dimensions.  To choose the projection matrix, we partition the data into $K$ partitions {\color{black}(we select $K=10$ arbitrarily)}, based on the percentile of the target variable, we obtain a K-class classification problem.  Then, we can apply \Lol~to learn the projection.  Figure \ref{f:generalizations}{\color{magenta}C} shows an example of this approach, contrasted with \Lasso~and partial least squares, in a sparse simulation setting (see Methods for details). \Lol~is able to find a better low-dimensional projection than \Lasso, and performs significantly better than partial least squares, for essentially all choices of number of dimensions to project into.

\begin{figure}
\centering
\includegraphics[width=1\linewidth]{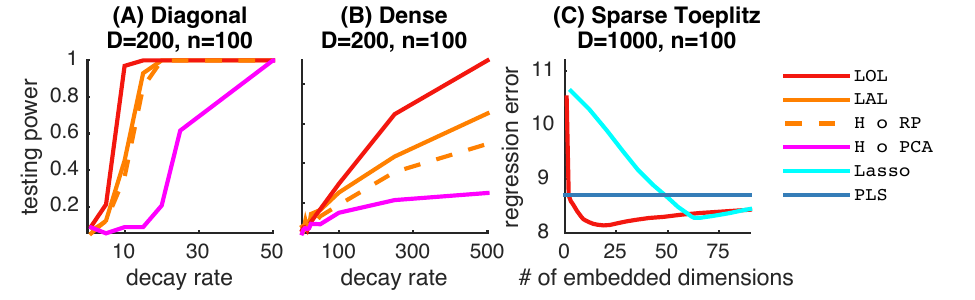}
\caption{
The intuition of including the mean difference vector is equally useful for other supervised manifold learning problems, including testing and regression.
\textbf{(A)} and \textbf{(B)} show two different high-dimensional testing settings, as described in Methods.  Power is plotted against the decay rate of the spectrum, which approximates the effective number of dimensions.  \Lol~composed with Hotelling's test outperforms the random projections variants described in \cite{Lopes2011a}, as well as several other variants.
\textbf{(C)} A sparse high-dimensional regression setting, as described in Methods, designed for sparse methods to perform well.  Log$_{10}$ mean squared error is plotted against the number of projected dimensions.
\Lol~composed with linear regression outperforms \sct{Lasso}~(cyan), the classic sparse regression method, as well as partial least squares (PLS; black).
These three simulation settings therefore demonstrate the generality of this technique.
}
\label{f:generalizations}
\end{figure}

\section{The R implementation of \Lol}

Figure \ref{Rimpl} shows the R implementation of \Lol~for binary classification
using FlashMatrix \cite{FlashMatrix}. The implementation takes a $D \times I$
matrix, where each column is a training instance and each instance has D
features, and outputs a $D \times k$ projection matrix.

\begin{figure}[h!]
\begin{lstlisting}[language=R]
LOL <- function(m, labels, k) {
	counts <- fm.table(labels)
	num.labels <- length(counts$val)
	num.features <- dim(m)[1]
	nv <- k - (num.labels - 1)
	gr.sum <- fm.groupby(m, 1, fm.as.factor(labels, 2), fm.bo.add)
	gr.mean <- fm.mapply.row(gr.sum, counts$Freq, fm.bo.div, FALSE)
	diff <- fm.get.cols(gr.mean, 1) - fm.get.cols(gr.mean, 2)
	svd <- fm.svd(m, nv=0, nu=nv)
	fm.cbind(diff, svd$u)
}
\end{lstlisting}
\caption{The R implementation of \Lol.}
\label{Rimpl}
\end{figure}

\begin{algorithm}[h!]
\caption{Simple pseudocode for two class \Lol~on sample data.}
\label{alg:LOL.train}
\begin{algorithmic}[1]
\Require $X$ a $p \times n$ matrix ($n \ll p$), where columns are observations; rows are
features.  An $n$ length vector of observation labels, $\mathbf y$.  
An integer $k$ to specify desired output dimension.
\Ensure $\mb{A} \in \Real^{p \times k}$
\Function{\texttt{LOL.train}}{$X$, $Y$, $k$}
\For{all $j \in J$}
\State ${n}_j = \sum_{i=1}^n \mb{I}(y_i = j) $ \Comment{sample size per class}
\State $\mh{\mu}_j = \frac{1}{n_j} \sum_{i=1}^n \mb{x}_i \mb{I}(y_i=j)$ \Comment{class means}
\EndFor
\State $\mh{\delta} = \mh{\mu}_1 - \mh{\mu}_2$ \Comment{difference of means}
\State $\mh{\delta} = \mh{\delta} / \norm{\mh{\delta}}$ \Comment{unit normalize difference of means}
\For{all $i \in [n]$}
\State $\mt{x}_i = \mb{x}_i - \mh{\mu}_{y_i}$ \Comment{class centered data}
\EndFor
\State $[\mh{u},\mh{d},\mh{v}]=$\texttt{svds}$(\mt{x},k-1)$ \Comment{compute top $k$ singular vectors}
\State $A=[\mh{\delta}, \mh{u}]$ \Comment{concatenate difference of the means and the top $k$ right singular vectors}



\EndFunction
\end{algorithmic}
\end{algorithm}




\end{document}